\documentclass[letterpaper]{article} 
\usepackage[preprint]{aaai2027}  
\usepackage[hyphens]{url}  
\usepackage{graphicx} 
\usepackage{natbib}  
\usepackage{caption} 
\usepackage{amsmath}
\usepackage{amssymb}
\usepackage{booktabs}
\usepackage{array}
\usepackage{multirow}
\usepackage{makecell}
\usepackage[table]{xcolor}
\usepackage{algorithm}
\usepackage{algpseudocode}
\usepackage{pifont}
\usepackage{placeins}
\usepackage{amsthm}
\usepackage{mathtools}
\usepackage{enumitem}
\usepackage{float}
\usepackage{needspace}

\newcommand{\method}{VARI}
\newcommand{\GCD}{GCD}
\newcommand{\FCDPO}{FC-DPO}
\newcommand{\best}[1]{\textbf{#1}}
\newcommand{\second}[1]{\underline{#1}}
\newcommand{\cmark}{\ding{51}}
\newcommand{\xmark}{\ding{55}}
\newcommand{\Pz}{\mathcal{P}(z)}
\newcommand{\Xz}{\mathcal{X}(z)}
\newcommand{\clip}{\operatorname{clip}}
\definecolor{variGray}{gray}{0.91}
\definecolor{variLight}{gray}{0.96}
\definecolor{variBlue}{RGB}{226,238,252}
\definecolor{variBand}{RGB}{246,249,253}
\newcommand{\grayrow}{\rowcolor{variGray}}

\newcommand{\SummaryRowsVFourteen}{%
Base & 80.11 & 82.83 & 69.23 & 47.69 & 32.11 \\%
VideoDPO & 81.06 & 83.68 & 70.59 & 49.60 & 34.62 \\%
OnlineVPO & \second{81.60} & \second{84.19} & 71.24 & \second{50.49} & 35.31 \\%
\GCD{} & 81.15 & 83.46 & \second{71.91} & 50.48 & \second{35.73} \\%
\grayrow \method{} & \best{82.67} & \best{84.91} & \best{73.71} & \best{52.13} & \best{37.58} \\%
}

\newcommand{\TTwoVRowsVFourteen}{%
Base & 79.82 & 7.47 & 53.11 & 40.29 & 69.31 & 60.57 & 23.24 & 47.69 \\%
VideoDPO & 81.34 & 8.25 & 55.56 & \second{41.89} & 71.24 & 63.49 & \second{25.42} & 49.60 \\%
OnlineVPO & 82.31 & \best{10.04} & 56.72 & \best{43.91} & 72.13 & 64.49 & 23.81 & \second{50.49} \\%
\GCD{} & \second{83.26} & 8.66 & \second{56.97} & 39.88 & \second{72.71} & \second{66.61} & 25.29 & 50.48 \\%
\grayrow \method{} & \best{86.64} & \second{9.58} & \best{58.55} & 40.42 & \best{75.06} & \best{67.78} & \best{26.87} & \best{52.13} \\%
}

\newcommand{\HeadlineTTwoCGain}{1.64}
\newcommand{\TTwoRWorstGainPrompt}{2.45}
\newcommand{\VBenchRWorstGainPrompt}{3.55}
\newcommand{\TTwoRDropReductionPrompt}{1.21}
\newcommand{\VBenchRDropReductionPrompt}{2.77}
\newcommand{\TTwoRFIFRReductionPrompt}{3.78}
\newcommand{\VBenchRFIFRReductionPrompt}{2.99}
\newcommand{\TTwoRMeanGainGCD}{0.86}
\newcommand{\TTwoRWorstGainGCD}{0.72}

\newcommand{\TTwoRFIFRReductionGCD}{1.07}

\newcommand{\RobustnessRowsFinal}{%
Base & 47.18 & 44.47 & 4.92 & 11.14 & 68.29 & 61.27 & 11.74 & 11.38 & 82.78 \\%
VideoDPO & 49.54 & 47.08 & 3.97 & 8.93 & 70.16 & 65.18 & 9.19 & 8.72 & 83.51 \\%
OnlineVPO & 50.09 & 46.95 & 4.12 & 8.21 & 70.79 & 65.07 & 8.61 & 8.26 & \second{83.94} \\%
\GCD{} & \second{50.39} & \second{48.81} & \best{2.52} & \second{5.50} & \second{71.36} & \second{67.12} & \second{6.51} & \second{6.08} & 83.47 \\%
\grayrow \method{} & \best{51.25} & \best{49.53} & \second{2.76} & \best{4.43} & \best{72.68} & \best{68.73} & \best{5.84} & \best{5.27} & \best{84.24} \\%
}

\newcommand{\PEFullRowsFinal}{%
Raw & Base DiT & 47.69 & 44.47 & 4.92 & 11.14 \\%
Raw & \method{} DiT
& 52.13 & 49.53 & \best{2.76} & \second{4.43} \\%
Official PE & Base DiT
& \second{57.63} & \second{53.09} & 6.91 & 7.64 \\%
\grayrow Official PE & \method{} DiT
& \best{59.91} & \best{56.28}
& \second{4.57} & \best{4.14} \\
}

\newcommand{\LTXFullRowsVFourteen}{%
Base & 44.79 & 41.86 & 5.48 & 58.31 & 29.66 \\%
VideoDPO & 46.31 & 43.19 & 4.41 & 60.08 & 31.02 \\%
OnlineVPO & 46.84 & 43.04 & 4.82 & 60.67 & 31.48 \\%
+\GCD{} & \second{47.57} & \second{44.79} & \second{3.51} & \second{61.61} & \second{32.17} \\%
\grayrow +\method{} & \best{48.29} & \best{45.76} & \best{3.01} & \best{63.21} & \best{33.37} \\%
}

\newcommand{\ThresholdRowVFourteen}{%
$\tau_b$ (\%) & 78.07 & 7.18 & 50.22 & 36.31 & 65.09 & 56.37 & 20.19 \\
}

\newcommand{\SuppTrainingRowsVFourteen}{%
Reward-SFT & 48.58 & 43.82 & 5.97 & 13.11 \\%
Prompt-DPO & 49.91 & 46.77 & 4.49 & 9.24 \\%
FC-DPO & 50.31 & 47.68 & 4.01 & 7.76 \\%
\GCD{} & 50.46 & \second{48.88} & \second{2.57} & \second{5.41} \\%
\GCD{} + prompt-DPO & \second{50.82} & 48.64 & 3.28 & 6.19 \\%
\grayrow \method{} & \best{51.17} & \best{49.51} & \best{2.29} & \best{4.44} \\%
}

\newcommand{\MiningRowsVFourteen}{%
Single-form & \xmark & \xmark & \xmark & 41.08 & 16.89 & 14.94 \\%
Mean only & \cmark & \xmark & \xmark & 44.17 & 13.43 & 11.18 \\%
Mean + worst & \cmark & \cmark & \xmark & 46.12 & 10.21 & 8.24 \\%
Mean + worst - drop & \cmark & \cmark & \cmark & \best{46.95} & \best{7.48} & \best{5.96} \\%
\grayrow Full + $Q$ & \cmark & \cmark & \cmark & 46.87 & 7.64 & 6.08 \\
}

\newcommand{\PairScaleRowsVFourteen}{%
Random positive vs random negative & 43.12 & 15.79 & 14.09 \\%
Best-group vs low-score & 45.07 & 12.34 & 9.88 \\%
Best-group vs worst-group & 45.74 & 9.83 & 7.41 \\%
Best-group vs form-contrastive & \best{46.88} & \best{7.54} & 5.89 \\%
Top-$m$ positives + mixed negatives & 46.63 & 7.82 & \best{5.74} \\%
\midrule%
1 form, 4 seeds & 41.23 & 16.21 & 14.98 \\%
3 forms, 4 seeds & 44.93 & 10.71 & 9.37 \\%
5 forms, 2 seeds & 45.19 & 9.97 & 8.71 \\%
\grayrow 5 forms, 4 seeds & 46.81 & 7.73 & 6.07 \\%
7 forms, 4 seeds & \best{46.92} & \best{7.68} & \best{5.91} \\%
}

\newcommand{\VerificationRowsVFourteen}{%
Single original form & 49.91 & 41.24 & 16.23 & 14.98 \\%
Two verified forms & 50.55 & 43.87 & 11.82 & 10.69 \\%
Five unverified rewrites & 51.06 & 43.39 & 13.14 & 11.91 \\%
\grayrow Five verified forms & \best{51.76} & \best{46.81} & \best{7.74} & \best{6.04} \\%
}

\newcommand{\StoryRowsVFourteen}{%
StoryEval strict score & 32.11 & 33.21 & 34.62 & 35.31 & \best{37.58} \\
}

\providecommand{\method}{VARI}
\providecommand{\GCD}{GCD}
\providecommand{\FCDPO}{FC-DPO}
\providecommand{\Pz}{\mathcal{P}(z)}
\providecommand{\Xz}{\mathcal{X}(z)}
\providecommand{\best}[1]{\textbf{#1}}
\providecommand{\second}[1]{\underline{#1}}
\providecommand{\cmark}{\textcolor{green!50!black}{\ding{51}}}
\providecommand{\xmark}{\textcolor{red!70!black}{\ding{55}}}

\providecommand{\clip}{\operatorname{clip}}

\newcommand{\1}{\mathbf{1}}

\definecolor{variGray}{gray}{0.91}
\definecolor{variBlue}{RGB}{226,238,252}
\definecolor{variBand}{RGB}{246,249,253}
\definecolor{variRule}{RGB}{116,151,191}
\definecolor{variNavy}{RGB}{31,70,112}
\definecolor{variWarm}{RGB}{252,249,242}
\providecommand{\grayrow}{\rowcolor{variGray}}

\newtheorem{proposition}{Proposition}

\newsavebox{\promptboxsave}
\newenvironment{promptbox}[1]{%
  \begin{center}
  \setlength{\fboxsep}{7pt}%
  \begin{lrbox}{\promptboxsave}%
  \begin{minipage}{0.92\linewidth}%
  {\bfseries\color{variNavy}#1}\par\vspace{2pt}%
  {\color{variRule}\hrule height 0.7pt}\vspace{5pt}%
  \small
}{%
  \end{minipage}%
  \end{lrbox}%
  \fcolorbox{variRule}{variBand}{\usebox{\promptboxsave}}%
  \end{center}
}

\newsavebox{\compactpromptboxsave}
\newenvironment{compactpromptbox}[1]{%
  \begin{center}
  \setlength{\fboxsep}{4pt}%
  \begin{lrbox}{\compactpromptboxsave}%
  \begin{minipage}{0.96\linewidth}%
  {\bfseries\color{variNavy}#1}\par\vspace{1pt}%
  {\color{variRule}\hrule height 0.7pt}\vspace{2pt}%
  \footnotesize
}{%
  \end{minipage}%
  \end{lrbox}%
  \fcolorbox{variRule}{variBand}{\usebox{\compactpromptboxsave}}%
  \end{center}
}

\newsavebox{\analysisboxsave}
\newenvironment{analysisbox}[1]{%
  \begin{center}
  \setlength{\fboxsep}{6pt}%
  \begin{lrbox}{\analysisboxsave}%
  \begin{minipage}{0.93\linewidth}%
  {\bfseries #1}\par\vspace{2pt}%
  \small
}{%
  \end{minipage}%
  \end{lrbox}%
  \fcolorbox{black!35}{variWarm}{\usebox{\analysisboxsave}}%
  \end{center}
}

\providecommand{\SummaryRowsVFourteen}{%
Base & 80.11 & 82.83 & 69.23 & 47.69 & 32.11 \\%
VideoDPO & 81.06 & 83.68 & 70.59 & 49.60 & 34.62 \\%
OnlineVPO & \second{81.60} & \second{84.19} & 71.24 & \second{50.49} & 35.31 \\%
\GCD{} & 81.15 & 83.46 & \second{71.91} & 50.48 & \second{35.73} \\%
\grayrow \method{} & \best{82.67} & \best{84.91} & \best{73.71} & \best{52.13} & \best{37.58} \\%
}

\providecommand{\TTwoVRowsVFourteen}{%
Base & 79.82 & 7.47 & 53.11 & 40.29 & 69.31 & 60.57 & 23.24 & 47.69 \\%
VideoDPO & 81.34 & 8.25 & 55.56 & \second{41.89} & 71.24 & 63.49 & \second{25.42} & 49.60 \\%
OnlineVPO & 82.31 & \best{10.04} & 56.72 & \best{43.91} & 72.13 & 64.49 & 23.81 & \second{50.49} \\%
\GCD{} & \second{83.26} & 8.66 & \second{56.97} & 39.88 & \second{72.71} & \second{66.61} & 25.29 & 50.48 \\%
\grayrow \method{} & \best{86.64} & \second{9.58} & \best{58.55} & 40.42 & \best{75.06} & \best{67.78} & \best{26.87} & \best{52.13} \\%
}

\providecommand{\LTXFullRowsVFourteen}{%
Base & 44.79 & 41.86 & 5.48 & 58.31 & 29.66 \\%
VideoDPO & 46.31 & 43.19 & 4.41 & 60.08 & 31.02 \\%
OnlineVPO & 46.84 & 43.04 & 4.82 & 60.67 & 31.48 \\%
+\GCD{} & \second{47.57} & \second{44.79} & \second{3.51} & \second{61.61} & \second{32.17} \\%
\grayrow +\method{} & \best{48.29} & \best{45.76} & \best{3.01} & \best{63.21} & \best{33.37} \\%
}

\providecommand{\ThresholdRowVFourteen}{%
$\tau_b$ (\%) & 78.07 & 7.18 & 50.22 & 36.31 & 65.09 & 56.37 & 20.19 \\
}

\providecommand{\SuppTrainingRowsVFourteen}{%
Reward-SFT & 48.58 & 43.82 & 5.97 & 13.11 \\%
Prompt-DPO & 49.91 & 46.77 & 4.49 & 9.24 \\%
FC-DPO & 50.31 & 47.68 & 4.01 & 7.76 \\%
\GCD{} & 50.46 & \second{48.88} & \second{2.57} & \second{5.41} \\%
\GCD{} + prompt-DPO & \second{50.82} & 48.64 & 3.28 & 6.19 \\%
\grayrow \method{} & \best{51.17} & \best{49.51} & \best{2.29} & \best{4.44} \\%
}

\providecommand{\MiningRowsVFourteen}{%
Single-form & \xmark & \xmark & \xmark & 41.08 & 16.89 & 14.94 \\%
Mean only & \cmark & \xmark & \xmark & 44.17 & 13.43 & 11.18 \\%
Mean + worst & \cmark & \cmark & \xmark & 46.12 & 10.21 & 8.24 \\%
Mean + worst - drop & \cmark & \cmark & \cmark & \best{46.95} & \best{7.48} & \best{5.96} \\%
\grayrow Full + $Q$ & \cmark & \cmark & \cmark & 46.87 & 7.64 & 6.08 \\
}

\providecommand{\PairScaleRowsVFourteen}{%
Random positive vs random negative & 43.12 & 15.79 & 14.09 \\%
Best-group vs low-score & 45.07 & 12.34 & 9.88 \\%
Best-group vs worst-group & 45.74 & 9.83 & 7.41 \\%
Best-group vs form-contrastive & \best{46.88} & \best{7.54} & 5.89 \\%
Top-$m$ positives + mixed negatives & 46.63 & 7.82 & \best{5.74} \\%
\midrule%
1 form, 4 seeds & 41.23 & 16.21 & 14.98 \\%
3 forms, 4 seeds & 44.93 & 10.71 & 9.37 \\%
5 forms, 2 seeds & 45.19 & 9.97 & 8.71 \\%
\grayrow 5 forms, 4 seeds & 46.81 & 7.73 & 6.07 \\%
7 forms, 4 seeds & \best{46.92} & \best{7.68} & \best{5.91} \\%
}

\providecommand{\VerificationRowsVFourteen}{%
Single original form & 49.91 & 41.24 & 16.23 & 14.98 \\%
Two verified forms & 50.55 & 43.87 & 11.82 & 10.69 \\%
Five unverified rewrites & 51.06 & 43.39 & 13.14 & 11.91 \\%
\grayrow Five verified forms & \best{51.76} & \best{46.81} & \best{7.73} & \best{6.07} \\%
}

\providecommand{\StoryRowsVFourteen}{%
StoryEval strict score & 32.11 & 33.21 & 34.62 & 35.31 & \best{37.58} \\
}

\newcommand{\VATEXRowsVFourteen}{%
Base & 63.77 & 59.66 & 6.18 & 9.21 \\%
VideoDPO & 65.99 & 62.02 & 5.31 & 7.43 \\%
OnlineVPO & 66.47 & 61.92 & 6.14 & 7.75 \\%
\GCD{} & \second{67.29} & \second{64.49} & \second{4.17} & \second{5.33} \\%
\grayrow \method{} & \best{68.16} & \best{65.09} & \best{4.03} & \best{4.85} \\%
}

\newcommand{\MSRVTTRowsVFourteen}{%
Base & 61.63 & 57.17 & 6.67 & 10.22 \\%
VideoDPO & 63.61 & 59.33 & 6.07 & 8.76 \\%
OnlineVPO & 64.05 & 59.24 & 6.37 & 9.00 \\%
\GCD{} & \second{65.21} & \second{61.69} & \second{4.96} & \second{6.33} \\%
\grayrow \method{} & \best{65.88} & \best{62.34} & \best{4.69} & \best{5.84} \\%
}

\newcommand{\AuditRowsVThirtySix}{%
Group-consistent positive & 96/100 & 91/100 & 88/100 \\%
Low-score negative & 76/100 & 47/100 & 68/100 \\%
Dispersion-only negative & 58/100 & \best{81/100} & 50/100 \\%
Intent-audit negative & 85/100 & 76/100 & 78/100 \\%
\grayrow Audited FC negative & \best{91/100} & 79/100 & \best{84/100} \\%
}
\newcommand{\RunRowsVThirtyThree}{%
OnlineVPO & $50.49\!\pm\!0.23$ & $46.95\!\pm\!0.27$ & $65.07\!\pm\!0.29$ \\%
\GCD{} & $50.48\!\pm\!0.18$ & $48.81\!\pm\!0.21$ & $67.12\!\pm\!0.25$ \\%
\grayrow \method{} & $\best{52.13}\!\pm\!0.19$ & $\best{49.53}\!\pm\!0.22$ & $\best{68.73}\!\pm\!0.26$ \\%
}

\newcommand{\CoreAttributionRowsVThirtyFour}{%
\GCD{} & $50.46\!\pm\!0.18$ & $48.88\!\pm\!0.21$ & $2.57\!\pm\!0.13$ & $5.41\!\pm\!0.24$ \\%
Intent-audit-only FC-DPO & $50.94\!\pm\!0.15$ & $49.09\!\pm\!0.19$ & $2.61\!\pm\!0.12$ & $5.02\!\pm\!0.22$ \\%
\method{} w/o matched-seed pairs & $51.06\!\pm\!0.19$ & $49.28\!\pm\!0.18$ & $2.46\!\pm\!0.16$ & $4.87\!\pm\!0.20$ \\%
\grayrow \method{} (matched-seed prioritized) & $\best{51.17}\!\pm\!0.13$ & $\best{49.51}\!\pm\!0.16$ & $\best{2.29}\!\pm\!0.11$ & $\best{4.44}\!\pm\!0.18$ \\%
}
\newcommand{\SupervisionYieldRowsVThirtyFour}{%
At least one audited positive & 4,371 & 87.42 \\%
At least one audited negative & 3,228 & 64.56 \\%
At least one preference pair & 3,069 & 61.38 \\%
GCD-only intent & 1,302 & 26.04 \\%
No retained supervision & 629 & 12.58 \\%
\midrule
Matched-seed retained pairs & 4,236 & 71.71 \\%
Fallback retained pairs & 1,671 & 28.29 \\%
}
\newcommand{\OverlapRowsVThirtyThree}{%
VBench & 13 & 0 \\%
T2V-CompBench & 23 & 0 \\%
StoryEval & 8 & 0 \\%
VATEX & 17 & 0 \\%
MSR-VTT & 26 & 0 \\%
}
\newcommand{\DiversityRowsVThirtyThree}{%
Base & 100.00 & 100.00 & 11.36 \\%
OnlineVPO & 98.20 & 97.40 & 12.01 \\%
\GCD{} & 95.70 & 95.30 & 12.71 \\%
\grayrow \method{} & 96.50 & 95.80 & \best{13.90} \\%
}
\newcommand{\CounterfactualRowsVThirtyThree}{%
Count & 10.77 & 11.41 & 11.68 & \best{13.12} \\%
Spatial relation & 13.26 & 14.01 & 14.82 & \best{15.97} \\%
Action-role swap & 11.88 & 12.59 & 13.37 & \best{14.53} \\%
Temporal order & 9.52 & 10.04 & 10.96 & \best{11.98} \\%
}
\newcommand{\CausalRowsVThirtyThree}{%
Reward-SFT & 48.58 & 43.82 & 5.97 & 13.11 \\%
Prompt-level DPO & 49.91 & 46.77 & 4.49 & 9.24 \\%
Multi-form prompt-DPO & 50.08 & 47.03 & 4.34 & 8.61 \\%
Direct worst-form DPO & 50.39 & 48.17 & 3.12 & 6.38 \\%
Form-specific distillation & 49.93 & 47.51 & 3.69 & 7.19 \\%
Single-positive GCD & 50.18 & 48.56 & 2.89 & 5.93 \\%
\GCD{} & 50.46 & 48.88 & 2.57 & 5.41 \\%
GroupScore reranking (no train) & 49.69 & 46.45 & 4.13 & 8.69 \\%
Dispersion-only FC-DPO & 50.69 & 48.35 & 3.58 & 6.91 \\%
Intent-audit-only FC-DPO & 50.94 & 49.09 & 2.61 & 5.02 \\%
\method{} w/o matched-seed pairs & \second{51.06} & \second{49.28} & \second{2.46} & \second{4.87} \\%
\grayrow \method{} (matched-seed prioritized) & \best{51.17} & \best{49.51} & \best{2.29} & \best{4.44} \\%
}
\newcommand{\HumanDetailRowsVThirtySix}{%
Intent faithfulness & 352 & 96 & 152 & 58.67 & 16.00 & 25.33 & 33.33 & [25.9, 40.6] & 0.59 \\%
Cross-form consistency & 378 & 83 & 139 & 63.00 & 13.83 & 23.17 & 39.83 & [32.4, 47.1] & 0.63 \\%
Visual quality & 319 & 101 & 180 & 53.17 & 16.83 & 30.00 & 23.17 & [14.9, 31.3] & 0.57 \\%
Overall preference & 347 & 106 & 147 & 57.83 & 17.67 & 24.50 & 33.33 & [26.3, 40.4] & 0.60 \\%
}

\newcommand{\RewriteFunnelRowsVThirtySix}{%
Generated candidates & 53,487 & 100.00 \\%
Malformed / near-duplicate removed & 2,713 & 5.07 \\%
Field-wise verifier inputs & 50,774 & 94.93 \\%
Verifier-accepted candidates & 34,318 & 64.16 \\%
Retained rewrites (four per source) & 20,000 & 37.39 \\%
Complete five-form groups & 5,000 & 100.00 intents \\%
}
\newcommand{\EquivalenceAuditRowsVThirtySix}{%
Entity / count & 59/60 & 98.33 \\%
Spatial relation & 57/60 & 95.00 \\%
Action--role binding & 58/60 & 96.67 \\%
Temporal order & 56/60 & 93.33 \\%
Attribute / state / other & 58/60 & 96.67 \\%
\midrule
Overall & 288/300 & 96.00 \\%
}

\newcommand{\EquivalenceSourceRowsVThirtySeven}{%
Training accepted pairs & 190/200 & 95.00 \\%
Benchmark accepted pairs & 98/100 & 98.00 \\%
\midrule
All accepted pairs & 288/300 & 96.00 \\%
Complete benchmark groups (all four rewrites valid) & 97/100 & 97.00 \\%
}

\newcommand{\CommitmentYieldRowsVThirtySeven}{%
$J_z=3$--$4$ & 1,617 & 1,492 (92.3) & 913 (56.5) & 876 (54.2) \\%
$J_z=5$--$6$ & 2,356 & 2,068 (87.8) & 1,539 (65.3) & 1,474 (62.6) \\%
$J_z\ge 7$ & 1,027 & 811 (79.0) & 776 (75.6) & 719 (70.0) \\%
\midrule
All intents & 5,000 & 4,371 (87.4) & 3,228 (64.6) & 3,069 (61.4) \\%
}

\title{VARI: Robust Text-to-Video Generation via Group-Structured Intent Alignment}
\author{%
Mingxuan Zhu$^{1,2}$\equalcontrib\thanks{This work was conducted during the author's internship at KlingAI.}, Hohei Chan$^{2,3}$\equalcontrib, Shiyuan Liu$^{2}$,\\
Yuanxing Zhang$^{2}$, Lu Chen$^{1}$%
}
\affiliations{%
$^{1}$Shanghai Jiao Tong University\\
$^{2}$KlingAI\\
$^{3}$South China University of Technology%
}

\begin{document}
\maketitle

\begin{abstract}
Modern text-to-video (T2V) models generate visually compelling videos, yet meaning-preserving changes in prompt wording can substantially alter semantic realization and overall quality. Existing prompt rewriting modifies the inference input, whereas conventional post-training constructs supervision from one prompt formulation at a time; neither explicitly optimizes reliability across equivalent forms. We introduce \textbf{\method{}} (\textbf{V}erified \textbf{A}lignment over \textbf{R}epresentations of \textbf{I}ntent), a group-structured framework that treats automatically verified formulations of the same visual intent as a shared supervision unit. \method{} builds matched form--seed rollout pools, cross-scores each video against all forms, and applies a wording-independent structured intent audit to identify valid realizations and hard semantic failures. Group-Consistent Distillation (GCD) aligns every verified form with interchangeable valid realizations, while Form-Contrastive Flow-DPO (FC-DPO) learns group-level preferences from audited positive--negative pairs, prioritizing same-seed, different-form contrasts. On HunyuanVideo-1.5, \method{} attains the highest T2V-CompBench, VBench, and StoryEval scores among the evaluated methods. On prompt-form extensions of T2V-CompBench and VBench, it improves weakest-form alignment over the strongest prompt-level baselines by 2.45 and 3.55 points and reduces cross-form drop by 30.5\% and 32.2\%, respectively, while preserving visual quality. Human evaluation, natural-caption controls, prompt-enhancement experiments, diversity analyses, and transfer to LTX-2.3 further support the robustness and transferability of these gains, while \method{} requires no additional inference-time module.
\end{abstract}

\section{Introduction}

Large text-to-video generators based on diffusion or flow matching now achieve strong visual fidelity, motion quality, and instruction following
\cite{ddpm,flowmatching,hunyuanvideo}.
Yet robust semantic realization remains challenging: meaning-preserving changes in prompt wording can alter entity counts, spatial relations, action bindings, event realization, or overall generation quality.
Such variation is distinct from desirable visual diversity, since appearance, composition, and motion details may change while the core visual intent should remain invariant.
For example, consider ``A boy reading in front of a bench'' and ``A bench is positioned behind a boy who is reading.'' They express the same spatial relation, yet a generator may realize it under one wording and violate it under the other.

Inference-time prompt rewriting can improve a selected input, but it does not train the generator itself to behave reliably across equivalent formulations.
Reward- and preference-based post-training instead samples multiple rollouts for a prompt, scores them with automatic rewards, and learns from preferred and rejected examples
\cite{videodpo,dpo,onlinevpo,videoreward,dancegrpo}.
However, supervision constructed from one formulation at a time cannot distinguish intent-level robustness from wording-specific success.
Semantic-preserving prompt forms provide complementary linguistic evidence for constructing more reliable supervision.

\begin{figure*}[t]
\centering
\includegraphics[width=0.94\textwidth]{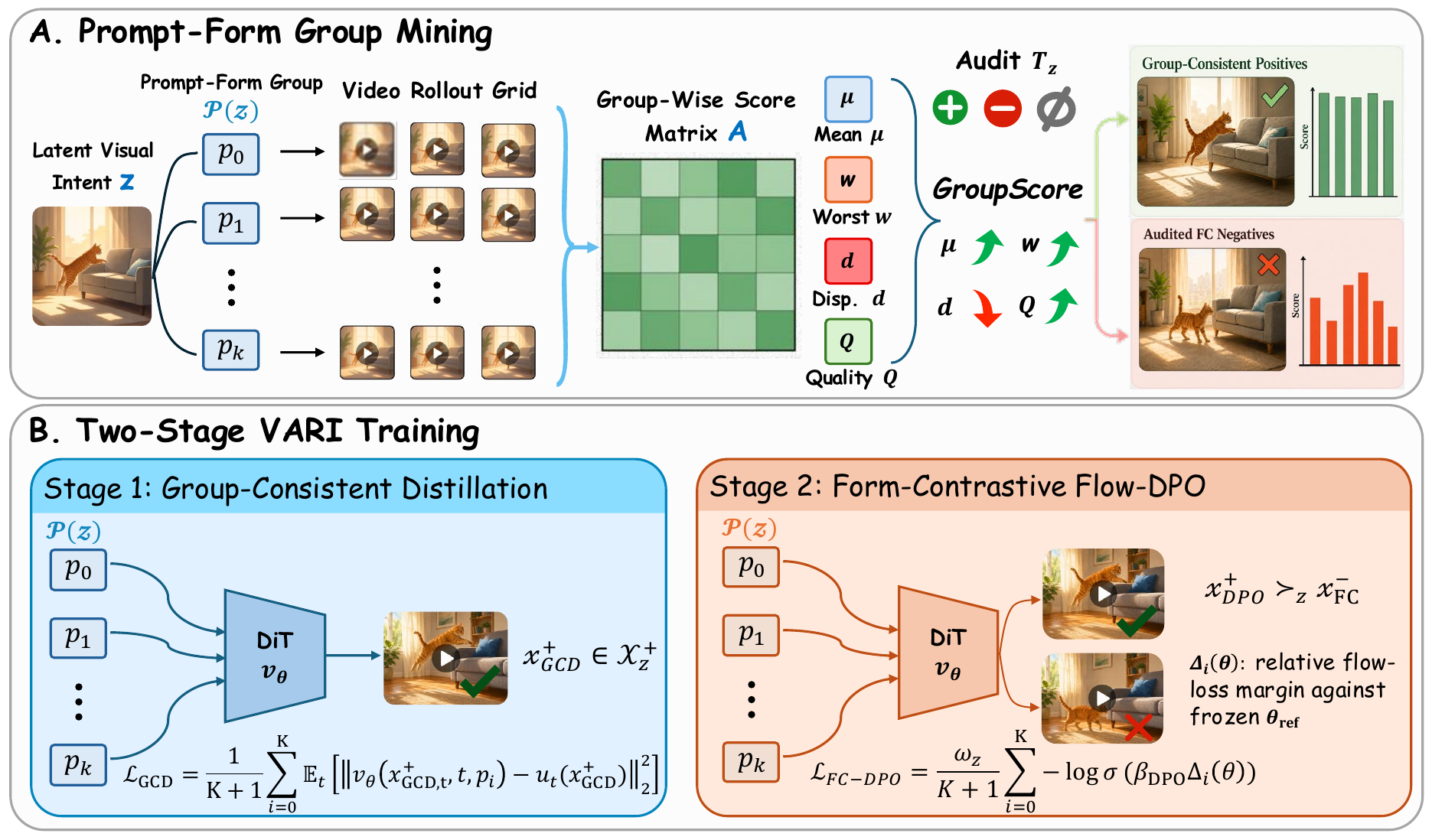}
\caption{Overview of \method{}. (A) Verified forms and shared seeds define a rollout grid. Cross-scoring provides mean support $\mu$, weakest-view support $w$, dispersion $d$, and quality $Q$; a structured audit against $T_z$ assigns semantic type, and GroupScore ranks candidates within type. (B) \GCD{} distills retained positives across all forms, while \FCDPO{} lifts audited intent-level preferences to the full group, prioritizing same-seed, different-form pairs.}
\label{fig:overview}
\end{figure*}

We introduce \textbf{\method{}} (\textbf{V}erified \textbf{A}lignment over \textbf{R}epresentations of \textbf{I}ntent), a group-structured intent-alignment framework that treats automatically verified prompt forms as a shared supervision unit.
For each visual intent $z$, we construct $\Pz=\{p_0,p_1,\ldots,p_K\}$, generate a matched form--seed rollout pool, and cross-score every candidate against all forms.
A generation seed determines the initial sampling noise; reusing it across forms controls stochastic variation, so same-seed differences more directly expose wording sensitivity.
Because equivalent forms share one structured semantic target, score variation for a fixed video is treated as \emph{scoring-view dispersion}, rather than form-dependent ground truth.
A frozen, wording-independent structured intent auditor checks atomic entities, counts, relations, action roles, and temporal commitments without observing prompt wording.
The audit determines semantic type, while cross-form evidence ranks candidates within type, yielding reliable positives and visually plausible hard negatives.

As illustrated in Figure~\ref{fig:overview}, \emph{Group-Consistent Distillation} (\GCD{}) trains every verified form on interchangeable valid realizations, whereas \emph{Form-Contrastive Flow-DPO} (\FCDPO{}) optimizes audited success--failure preferences under every form, prioritizing same-seed, different-form contrasts.
The resulting generator requires no additional inference-time module.
On HunyuanVideo-1.5~\cite{hunyuan15}, \method{} improves T2V-CompBench~\cite{t2vcompbench} by \HeadlineTTwoCGain{} points over the strongest prompt-level baseline and achieves the highest VBench~\cite{vbench} quality and StoryEval~\cite{storyeval} scores among the evaluated methods.
Prompt-form extensions of T2V-CompBench and VBench show weakest-form gains of \TTwoRWorstGainPrompt{} and \VBenchRWorstGainPrompt{} points, respectively.
Human evaluation and held-out controls show that the gains persist under auditor replacement, natural captions, prompt enhancement, and transfer to LTX-2.3 while preserving seed diversity and sensitivity to non-equivalent prompt changes.

Our main contributions are threefold:
\begin{itemize}
    \item \textbf{Verified prompt-form groups as supervision units.}
    We define prompt-form reliability over verified formulations of the same visually testable intent and use these groups, paired with shared seeds, as the supervision unit for T2V post-training.

    \item \textbf{Audited cross-form supervision.}
    We develop a compiler that cross-scores every rollout against all verified forms and uses a wording-independent structured intent audit to separate semantic validity from hardness ranking, yielding group-consistent positives and audited hard negatives.

    \item \textbf{Complementary group-level objectives.}
    We introduce \GCD{}, which trains every verified form on interchangeable valid realizations, and \FCDPO{}, which optimizes audited success--failure preferences under every form while prioritizing same-seed, different-form comparisons.
\end{itemize}

\section{Related Work}

\textbf{Diffusion, flow, and video generators.}
Modern visual generators build on denoising diffusion, latent diffusion, flow matching, rectified flows, diffusion transformers, and classifier-free guidance
\cite{ddpm,ldm,flowmatching,rectifiedflow,dit,cfg}.
Video generation has progressed from early pixel- and latent-space diffusion systems, cascaded spatial--temporal upsampling, and motion adaptation
\cite{videodiffusion,imagenvideo,makeavideo,videoldm,modelscope,animatediff,show1,videopoet,lumiere}
to large transformer- and flow-based backbones with stronger video latent spaces, including HunyuanVideo-1.5, LTX-2.3, and Wan2.2
\cite{svd,videocrafter2,latte,cogvideox,opensora,moviegen,hunyuanvideo,hunyuan15,ltxvideo,ltx23repo,wan,wan22repo}.
\method{} operates after pretraining and reorganizes supervision for an existing video DiT rather than modifying its backbone.

\textbf{Preference and reward-based alignment.}
Generative-model alignment includes direct preference objectives, policy-gradient fine-tuning, reward backpropagation, and reward-model-free denoising preference learning
\cite{dpo,ddpo,diffusiondpo,d3po}.
Image-generation studies have developed preference datasets, reward models, and input-side prompt optimization
\cite{pickapic,imagereward,hpsv2,promptist}.
For video, VADER, VideoDPO, VideoReward, OnlineVPO, DenseDPO, DanceGRPO, and VisionReward improve reward learning, preference construction, temporal granularity, or on-policy optimization
\cite{vader,videodpo,videoreward,onlinevpo,densedpo,dancegrpo,visionreward}.
Recent LocalDPO, Diffusion-APO, and Dual-IPO further study localized supervision, trajectory-aware updates, and iterative reward--generator alignment
\cite{localdpo,diffusionapo,dualipo}.
\method{} instead structures supervision across verified linguistic realizations of one visual intent. Main comparisons use VideoDPO and OnlineVPO because both admit matched HunyuanVideo-1.5 data and generation budgets; the newer methods change the underlying data or optimization pipeline.

\section{The VARI Framework}

\subsection{Semantic-Preserving Prompt-Form Groups}

Let $z$ denote a visual intent and let $p_0$ be its original prompt. A verified prompt-form group
\begin{equation}
\Pz=\{p_0,p_1,\ldots,p_K\}
\end{equation}
contains variants preserving the same structured target $T_z$, a set of atomic, visually testable commitments over objects, actions, attributes, relations, roles, and temporal order; unconstrained appearance may vary. Gemini 3.5 Flash generates ten training rewrites, Qwen3.6-27B verifies critical fields, and filtering retains four forms spanning lexical, syntactic, and caption-style variation; the roles are exchanged for benchmark groups. Human audit validates 190/200 training and 98/100 benchmark pairs, and 97/100 complete benchmark groups preserve $T_z$. Removing the three flagged groups changes headline robustness metrics by at most 0.11 points. Five forms and four shared seeds are adopted from held-out scale diagnostics, which show diminishing gains beyond five forms; details are in the supplement.

Each intent uses a seed set $\mathcal S_z$ shared across all verified forms, so form comparisons reuse the same initial sampling noises. The complete rollout pool is
\begin{equation}
\Xz=\{G_{\theta_{\rm roll}}(p_i,s):p_i\in\Pz,\;s\in\mathcal S_z\}.
\label{eq:pool}
\end{equation}
Here $G_{\theta_{\rm roll}}$ is the frozen rollout checkpoint used to construct supervision, and $s$ indexes the initial sampling noise. Mining uses the full within-intent form--seed pool; candidates are never paired across intents.

\subsection{Cross-Form Score Geometry}

Every candidate is evaluated against every verified form, yielding a score matrix $\mathbf A_z\in\mathbb R^{|\Xz|\times(K+1)}$ with
\begin{equation}
[\mathbf A_z]_{x,i}=A_i(x)
=\rho_A R_{\mathrm{VQA}}(x,p_i)+(1-\rho_A)R_{\mathrm{VS}}(x,p_i),
\label{eq:align}
\end{equation}
where VQAScore and VideoScore provide prompt-conditioned alignment signals \cite{vqascore,videoscore}. A separate low-weight quality term is
\begin{equation}
Q(x)=\rho_QR_{\mathrm{DOVER}}(x)+(1-\rho_Q)R_{\mathrm{motion}}(x),
\label{eq:quality}
\end{equation}
where $R_{\mathrm{motion}}$ summarizes motion smoothness, dynamics, flicker, and subject consistency \cite{dover,vbench}. Define the cross-form profile
{\small
\begin{equation}
\begin{aligned}
\mu(x,z)&=\frac{1}{K+1}\sum_{i=0}^{K}A_i(x),
& w(x,z)&=\min_i A_i(x),\\[-2pt]
b(x,z)&=\max_i A_i(x),
& d(x,z)&=b(x,z)-w(x,z),
\end{aligned}
\label{eq:stats}
\end{equation}
}
Thus $\mu$ measures average cross-form support, $w$ the weakest scoring view, $b$ the strongest, and $d$ their range. We rank candidates with $S_{\mathrm{grp}}=\lambda_{\mu}\mu+\lambda_w w-\lambda_d d+\lambda_qQ$, using $\rho_A=\rho_Q=0.6$ and validation-fixed $(\lambda_\mu,\lambda_w,\lambda_d,\lambda_q)=(0.50,0.25,0.15,0.10)$. Before the more expensive semantic audit, validation-fixed geometry--quality rules shortlist likely positives with high $w$, low $d$, and adequate $Q$, and likely hard negatives with high $b$, nontrivial $d$, and adequate $Q$. Only the deduplicated union is audited; the exact thresholds are in the supplement. This prefilter does not assign semantic labels: it only removes candidates that cannot satisfy the final positive or negative criteria.

For semantic typing, frozen Gemini 3.5 Flash (\texttt{gemini-3.5-flash}) \cite{gemini35flash} receives 24 temporally ordered frames and one atomic commitment $c_{z,j}$, without prompt strings, generating-form identity, or mining scores. A stateless score call returns probabilities for \texttt{A} (satisfied), \texttt{B} (violated), and \texttt{C} (uncertain); a separate diagnostic call records evidence only and never affects supervision. Write $T_z=\{c_{z,j}\}_{j=1}^{J_z}$, let $\mathcal K_z\subseteq\{1,\ldots,J_z\}$ index the required visually testable commitments, and define $s_j=P(\texttt{A})$ and $v_j=P(\texttt{B})$. We aggregate
\begin{equation}
\begin{aligned}
C_{\min}(x,z)&=\min_{j\in\mathcal K_z}s_j,
& C_{\mathrm{avg}}(x,z)&=\frac{1}{J_z}\sum_{j=1}^{J_z}s_j,\\[-2pt]
V_{\max}(x,z)&=\max_{j\in\mathcal K_z}v_j.&&
\end{aligned}
\label{eq:audit}
\end{equation}
A disjoint 600-video, three-annotator split fixes thresholds to achieve at least 90\% human precision; uncertain candidates abstain. The endpoint, request schema, frame sampling, and thresholds are frozen before mining, and the auditor is excluded from evaluation, checkpoint selection, and GroupScore tuning; Qwen3.6-27B \cite{qwen36} is used only for held-out replacement. Exact request and safety settings are in the supplement. Since all forms share $T_z$, $d$ is scoring-view dispersion, not form-dependent truth.

\subsection{Group-Structured Supervision Synthesis}

The shortlist supplies the necessary geometry and quality conditions, while the audit determines semantic type. With all components normalized to $[0,1]$, a stable positive must pass the positive shortlist and satisfy $C_{\min}\ge .70$ and $C_{\mathrm{avg}}\ge .80$; a hard negative must pass the negative shortlist and satisfy $V_{\max}\ge .75$. Uncertain candidates are excluded. We denote the resulting eligible sets by $\mathcal C_z^+$ and $\mathcal C_z^{\mathrm{FC}-}$, and retain $\mathcal X_z^+$ and $\mathcal X_z^{\mathrm{FC}-}$ as their top $m_+$ and $m_-$ candidates by GroupScore. Thus the audit fixes semantic type and GroupScore only ranks utility within type; for negatives, this favors visually plausible audit-confirmed failures rather than the most degraded videos. Exact set definitions are in the supplement.

Each rollout retains provenance $x_{i,s}=G_{\theta_{\rm roll}}(p_i,s)$. For a retained pair $r$, define its GroupScore gap as $g_{z,r}=S_{\mathrm{grp}}(x_{z,r}^+,z)-S_{\mathrm{grp}}(x_{z,r}^-,z)$ and its confidence weight as $\omega_{z,r}=\clip(g_{z,r},\omega_{\min},\omega_{\max})$. The compiler prioritizes audited positive--negative pairs generated with the same seed under different forms and satisfying a validation-fixed margin, then fills any remaining per-intent budget with top-gap audited pairs from the same intent; their union is $\Pi_z$. Thus ``form-contrastive'' describes generating-form provenance, while correctness is always judged against the common target $T_z$. The compiler emits one shared-target record per retained positive and group-lifted preference records for pairs in $\Pi_z$; exact set definitions and thresholds are in the supplement.

\subsection{Group-Consistent Distillation: Shared-Support Contraction}

For a video flow model with conditional velocity predictor $v_\theta$, let $\ell_{\mathrm{FM}}(x^+,p_i,t;\theta)$ denote the standard squared flow-matching error between $v_\theta(x_t^+,t,p_i)$ and target $u_t(x^+)$. Here $x_t^+$ is the corrupted or interpolated positive latent at sampled time $t$, and one record shares the latent and target across forms. Let $q_z^+$ sample the retained set $\mathcal X_z^+$; GCD optimizes
\begin{equation}
\mathcal L_{\mathrm{GCD}}
=\mathbb E_{z,\,x^+\sim q_z^+,\,t}
\left[\frac{1}{K+1}\sum_{i=0}^{K}\ell_{\mathrm{FM}}(x^+,p_i,t;\theta)\right].
\label{eq:gcd}
\end{equation}
Different records sample different valid positives and noise, so GCD contracts around a valid support set rather than one canonical video. The supplement shows that the per-record objective decomposes into fitting the group-mean prediction plus an explicit pairwise disagreement penalty; diversity and counterfactual controls test that this contraction does not erase valid variation or prompt sensitivity.

\subsection{Form-Contrastive Flow-DPO: Group-Lifted Margin Separation}

For a mined pair and sampled time $t$, let $\Delta_i(\theta,t)$ be the policy-versus-reference change in the rejected-minus-preferred flow-loss margin under form $p_i$, where the reference is the frozen post-GCD checkpoint. \method{} lifts the intent-level relation $x_{z,r}^+\succ_z x_{z,r}^-$ to every verified form and instantiates the preference update with a flow-DPO kernel:
\begin{align}
\mathcal L_{\mathrm{FC\mbox{-}DPO}}
&=\mathbb E_{z,r,t}\!\left[
\frac{\omega_{z,r}}{K+1}\sum_{i=0}^{K}
-\log\sigma\!\left(\beta_{\mathrm{DPO}}\Delta_i(\theta,t)\right)
\right]\nonumber\\[-2pt]
&\quad+\lambda_{\mathrm{aux}}\,
\mathbb E_{z,r,t}\!\left[
\frac{1}{K+1}\sum_{i=0}^{K}\ell_{\mathrm{FM}}(x_{z,r}^+,p_i,t;\theta)
\right].
\label{eq:fcdpo}
\end{align}
We set $\beta_{\mathrm{DPO}}=0.3$ and $\lambda_{\mathrm{aux}}=0.05$. Within each branch, the policy and reference evaluate the same sampled time and corrupted input. The first term separates positives from audited hard negatives under every form; because the nonlinear preference loss is averaged after each form-wise margin is computed, an easy form cannot cancel an unresolved form before the logistic penalty, and smaller margins receive stronger updates. The auxiliary term preserves positive support, while matched seeds reduce stochastic confounding.

\begin{algorithm}[t]
\caption{Audited Prompt-Form Supervision Compiler}
\label{alg:vari}
\scriptsize
\begin{algorithmic}[1]
\Require Intent $z$, target $T_z$, forms $\Pz$, shared seeds $\mathcal S_z$, budgets $m_+,m_-$, margin $\tau_{\mathrm{pair}}$, clip bounds $\omega_{\min},\omega_{\max}$
\Ensure GCD records $\mathcal D_{\mathrm{GCD}}^z$ and FC-DPO records $\mathcal D_{\mathrm{FC}}^z$
\For{each form $p_i\in\Pz$ and seed $s\in\mathcal S_z$}
    \State Generate $x_{i,s}$ and retain provenance $(i,s)$
\EndFor
\State Cross-score the pool to obtain $\mathbf A_z$, $Q$, and $(\mu,w,b,d,S_{\mathrm{grp}})$
\State Form geometry--quality shortlists $\mathcal U_z^+$ and $\mathcal U_z^-$ using validation-fixed conditions
\For{each $x\in\mathcal U_z^+\cup\mathcal U_z^-$ after deduplication}
    \State Audit atomic commitments in $T_z$ and assign positive, negative, or abstained type
\EndFor
\State Rank eligible candidates within type; retain $\mathcal X_z^+$ and $\mathcal X_z^{\mathrm{FC}-}$
\State $\mathcal D_{\mathrm{GCD}}^z\gets\{(\Pz,x^+):x^+\in\mathcal X_z^+\}$; $\mathcal D_{\mathrm{FC}}^z\gets\varnothing$
\For{matched-seed pairs first, then top-gap audited fallback pairs}
    \State $g\gets S_{\mathrm{grp}}(x^+,z)-S_{\mathrm{grp}}(x^-,z)$
    \If{$g\ge\tau_{\mathrm{pair}}$ and the budget is not full}
        \State Add $(\Pz,x^+,x^-,\clip(g,\omega_{\min},\omega_{\max}))$ to $\mathcal D_{\mathrm{FC}}^z$
    \EndIf
\EndFor
\State \Return $(\mathcal D_{\mathrm{GCD}}^z,\mathcal D_{\mathrm{FC}}^z)$
\end{algorithmic}
\end{algorithm}

\section{Experiments}
\label{sec:experiments}

\subsection{Experimental Setup}

\textbf{Backbone and training pool.} We use HunyuanVideo-1.5 \cite{hunyuan15} with a unified 480p, 16:9, 121-frame, 24-fps, 50-step protocol. The training pool contains 5,000 independently constructed source intents balanced across attributes, relations, actions, interactions, numeracy, and multi-event composition; each has four verified rewrites and four shared seeds, yielding 100,000 rollouts. All trainable methods use matched groups and generation budgets. Offline methods share the same pool, whereas OnlineVPO performs on-policy refresh under the same total budget. Main comparisons are Base, VideoDPO, OnlineVPO, \GCD{}, and full \method{}, with additional controls in the ablation.

\textbf{Evaluation principle.} Mining scores are not reused for reporting. Benchmark results are recomputed with the official VBench, T2V-CompBench, and StoryEval evaluator implementations. VBench and T2V-CompBench use the same 480p, 16:9, 121-frame, 24-fps, 50-step generation configuration for every system; StoryEval follows its official 10s setting. These extensions introduce no learned evaluator: they reuse the official per-video scores unchanged and add only verified prompt-form grouping and intent-level aggregation, yielding VBench-R and T2V-CompBench-R.

\textbf{Prompt-form robustness protocol.} For each of the 1,400 T2V-CompBench prompts, we construct five verified forms. VBench-R uses matched official semantic dimensions; each rewritten form inherits the source category and structured target, and only the generation-side wording changes. T2V-CompBench-R reuses the official score for the inherited category, whereas VBench-R reuses the matched official \mbox{semantic} dimension and reports official VBench quality separately on the same clips. Mean, Worst, and max--min Drop are aggregated within each intent. FIFR counts mixed-success groups in which at least one form passes and another fails a validation-fixed, category- or dimension-specific threshold; it is interpreted jointly with Mean and Worst rather than as standalone generation quality.

\textbf{Data independence and natural-caption controls.} Source intents are split before rewriting, so every form of an intent remains in one partition. VBench, T2V-CompBench, StoryEval, VATEX, and MSR-VTT prompts are excluded from training, calibration, and checkpoint selection. Normalized exact matching, lexical retrieval, and frozen sentence-embedding retrieval followed by manual review find no shared visual intent; training and benchmark groups also use disjoint rewrite instructions and verifier instances. The supplement reports the full overlap audit and held-out VATEX \cite{vatex} and MSR-VTT \cite{msrvtt} controls scored by VisionReward \cite{visionreward}. VATEX/MSR-VTT weakest-form scores improve from 61.92/59.24 for OnlineVPO to 65.09/62.34 for \method{}.

\textbf{Audit, coverage, and statistics.} The protocol is frozen before mining, and only the union of the validation-fixed positive and negative shortlists is audited. Across 5,000 intents, 87.4\% yield an audited positive, 64.6\% an audited negative, and 61.4\% a preference pair; 71.7\% of retained pairs are same-seed, different-form. VideoDPO, OnlineVPO, \GCD{}, and \method{} use three matched training seeds; full deviations and source-intent paired tests are reported in the supplement.

\textbf{Compute efficiency.} All trainable methods share the 100,000-video rollout budget. \method{} adds one-time, parallelizable offline cross-scoring, shortlist auditing, and a second optimization stage, giving modest overhead over VideoDPO and the same cost regime as OnlineVPO, while leaving per-video inference cost unchanged; details are in the supplement.

\subsection{Standard Benchmark Results}

Table~\ref{tab:summary} gives the headline benchmark view. VBench provides quality and semantic anchors, T2V-CompBench measures compositional prompt following, and StoryEval evaluates strict event completion \cite{storyeval}. \method{} gives the strongest overall scores. \GCD{} is comparatively stronger on semantic consistency, while the full method also improves quality and StoryEval, indicating complementary stages without trading away fidelity or event completion.

\begin{table}[t]
\centering
\scriptsize
\setlength{\tabcolsep}{1.35pt}
\renewcommand{\arraystretch}{0.94}
\begin{tabular}{@{}lccccc@{}}
\toprule
\textbf{Method} & \makecell{\textbf{VBench}\\\textbf{Total}} & \makecell{\textbf{VBench}\\\textbf{Qual.}} & \makecell{\textbf{VBench}\\\textbf{Sem.}} & \makecell{\textbf{T2V-C}\\\textbf{Avg.}} & \textbf{StoryEval} \\
\midrule
\SummaryRowsVFourteen
\bottomrule
\end{tabular}
\caption{Standard benchmark results. T2V-C averages seven official categories; StoryEval reports strict event completion.}
\label{tab:summary}
\end{table}

\subsection{Dimension-Wise Diagnostics}

Table~\ref{tab:dimensions} expands T2V-CompBench into all seven categories. OnlineVPO remains strongest on dynamic attributes and motion binding, whereas \method{} gains most on constant attributes, spatial relations, action binding, interaction, and numeracy. This pattern matches the entity, relation, role, and count commitments in the structured target rather than a uniform improvement in low-level motion generation.

\begin{table*}[!t]
\centering
\scriptsize
\setlength{\tabcolsep}{2.00pt}
\renewcommand{\arraystretch}{0.84}
\resizebox{0.86\textwidth}{!}{%
\begin{tabular}{@{}lcccccccc@{}}
\toprule
\textbf{Method} & \textbf{Cons. Attr.} & \textbf{Dyn. Attr.} & \textbf{Spatial Rel.} & \textbf{Motion Bind.} & \textbf{Action Bind.} & \textbf{Interaction} & \textbf{Numeracy} & \textbf{Avg.} \\
\midrule
\TTwoVRowsVFourteen
\bottomrule
\end{tabular}%
}
\caption{T2V-CompBench categories. \method{} gives the best average; OnlineVPO leads dynamic attribute and motion binding.}
\label{tab:dimensions}
\end{table*}

\begin{table*}[!t]
\centering
\scriptsize
\setlength{\tabcolsep}{1.85pt}
\renewcommand{\arraystretch}{0.85}
\resizebox{0.76\textwidth}{!}{%
\begin{tabular}{@{}lccccccccc@{}}
\toprule
\multirow{2}{*}{\textbf{Method}} & \multicolumn{4}{c}{\textbf{T2V-CompBench-R}} & \multicolumn{4}{c}{\textbf{VBench-R Semantic}} & \multicolumn{1}{c}{\textbf{VBench-R Quality}} \\
\cmidrule(lr){2-5}\cmidrule(lr){6-9}\cmidrule(l){10-10}
& Mean$\uparrow$ & Worst$\uparrow$ & Drop$\downarrow$ & FIFR$\downarrow$ & Mean$\uparrow$ & Worst$\uparrow$ & Drop$\downarrow$ & FIFR$\downarrow$ & Mean$\uparrow$ \\
\midrule
\RobustnessRowsFinal
\bottomrule
\end{tabular}%
}
\caption{Prompt-form group extensions of the official evaluators; VBench quality is reported separately.}
\label{tab:robustness}
\end{table*}

\subsection{Prompt-Form Robustness Extensions}

\begin{figure}[!b]
\centering
\includegraphics[width=0.92\linewidth]{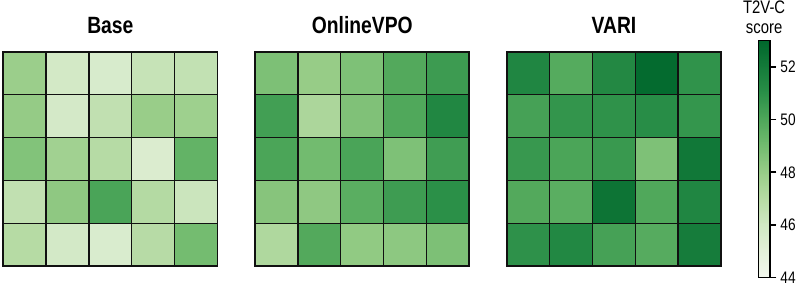}
\caption{Representative mining-view score matrices for one intent group. Rows are generated candidates and columns are verified scoring forms. Cell variation is diagnostic evidence, not form-dependent ground truth for a fixed video.}
\label{fig:heatmaps}
\end{figure}

Table~\ref{tab:robustness} is the central prompt-form robustness evaluation. Unlike the mining stage, the scores in this table are official benchmark scores aggregated over verified prompt-form groups. Let $s_i^{(b)}(z)$ denote the official score of benchmark $b$ for the video generated from the $i$-th prompt form of intent $z$. We report
\begin{equation}
\begin{aligned}
\mathrm{Mean}(z)&=\frac{1}{|\mathcal{P}(z)|}\sum_i s_i^{(b)}(z),\\[-1pt]
\mathrm{Worst}(z)&=\min_i s_i^{(b)}(z),\\[-1pt]
\mathrm{Drop}(z)&=\max_i s_i^{(b)}(z)-\min_i s_i^{(b)}(z).
\end{aligned}
\end{equation}
Form-Induced Failure Rate (FIFR) measures mixed-success groups:
\begin{equation}
\begin{aligned}
f_z^{(b)}&=\mathbf{1}\!\left[\min_i s_i^{(b)}(z)<\tau_b\ \wedge\ \max_i s_i^{(b)}(z)\ge\tau_b\right],\\[-1pt]
\mathrm{FIFR}&=\frac{100}{N}\sum_{z=1}^{N}f_z^{(b)}.
\end{aligned}
\label{eq:fifr}
\end{equation}
Here $\tau_b$ is fixed on held-out validation data for each inherited category or matched semantic dimension. FIFR is interpreted jointly with Mean and Worst because a uniformly weak model can also have low mixed-success inconsistency; VBench quality is reported separately as a preservation guardrail.

Against the strongest prompt-level baseline for each endpoint, \method{} raises T2V-CompBench-R/VBench-R Worst by \TTwoRWorstGainPrompt{}/\VBenchRWorstGainPrompt{} points, reduces Drop by \TTwoRDropReductionPrompt{}/\VBenchRDropReductionPrompt{}, and reduces FIFR by \TTwoRFIFRReductionPrompt{}/\VBenchRFIFRReductionPrompt{}. Relative to \GCD{}, it further improves T2V-R Mean/Worst by \TTwoRMeanGainGCD{}/\TTwoRWorstGainGCD{} and lowers FIFR by \TTwoRFIFRReductionGCD{}, while T2V-R Drop increases modestly from 2.52 to 2.76. Together, the results suggest that \GCD{} supplies the main cross-form contraction, whereas \FCDPO{} raises the lower tail and removes additional mixed-success failures, with only a small trade-off in the average within-group range. Intent-level VBench-R gains are significant (Worst 95\% CI $[2.66,4.43]$, Holm-adjusted $p<0.001$; FIFR McNemar $p<0.01$ after correction).

Figure~\ref{fig:heatmaps} shows a held-out cross-score matrix: OnlineVPO raises many cells but remains uneven, whereas \method{} is higher and more stable. These scores provide mining evidence only; the structured audit assigns semantic type against the shared target.

\subsection{Distribution Analysis}

Figure~\ref{fig:distributions} summarizes all VBench-R groups. \method{} shifts worst-form scores upward and tightens Drop; simultaneous Mean and Worst gains suggest that the improvement is not explained solely by a few strong forms or uniformly weak contraction.

\begin{figure}[!b]
\centering
\includegraphics[width=0.96\linewidth]{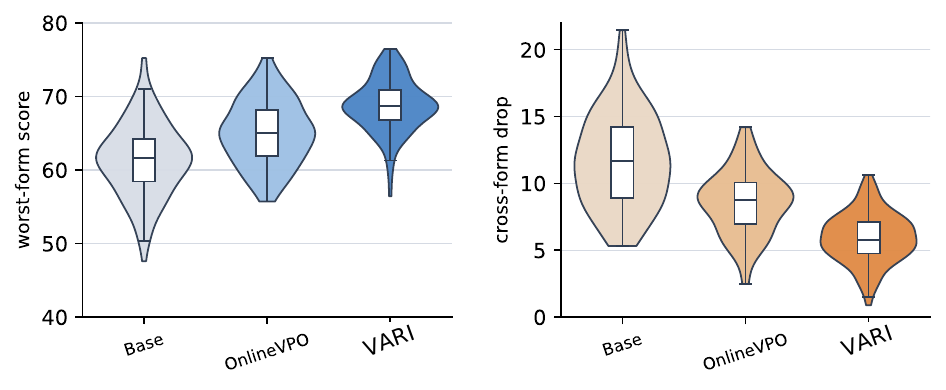}
\caption{Group-level VBench-R semantic distributions. \method{} raises worst-form scores and reduces cross-form drops.}
\label{fig:distributions}
\end{figure}

\begin{figure*}[!t]
\centering
\includegraphics[width=1.00\textwidth]{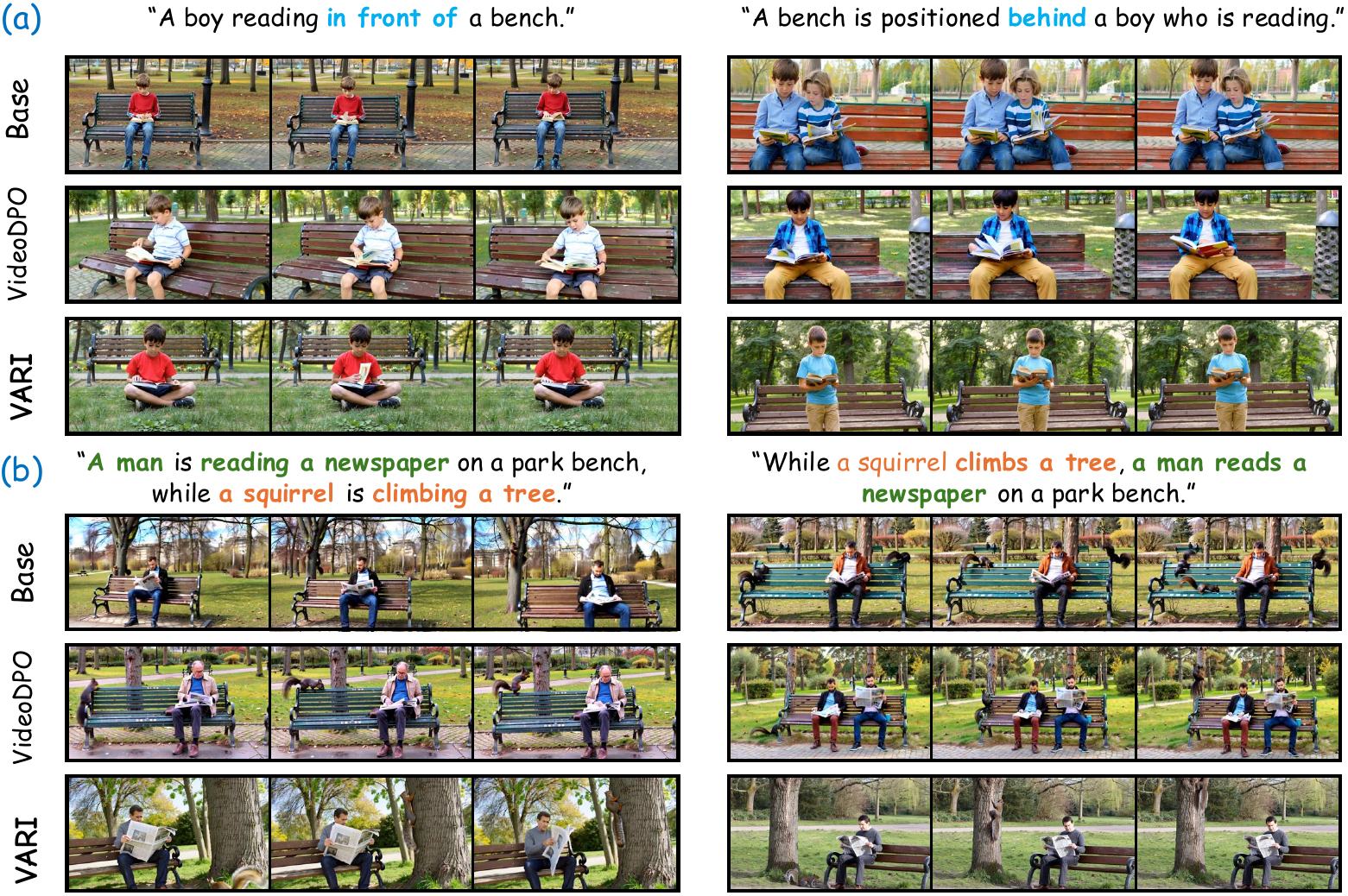}
\caption{Source/equivalent-form cases. Baselines violate spatial relation or action binding; \method{} preserves both across forms. The panels show temporally ordered frame strips from the evaluated video clips.}
\label{fig:qualitative-cases}
\end{figure*}

\subsection{Compatibility with Prompt Enhancement}

PE rewrites the input, whereas \method{} changes the generator. We freeze the official HunyuanVideo-1.5 template \cite{hunyuan15repo,hunyuan15handbook} for both methods and evaluate against the original forms. Under identical enhanced prompts, \method{} improves T2V-C/R-Worst from 57.63/53.09 to 59.91/56.28 and reduces R-Drop/FIFR from 6.91/7.64 to 4.57/4.14, isolating improved generator behavior from rewrite strength.

\subsection{Ablation Studies}

Table~\ref{tab:abl-train} controls paraphrase exposure, worst-form optimization, target sharing, audit, hardness, and pair provenance. Exposure alone is insufficient; \GCD{} provides the main contraction, while prompt-DPO after \GCD{} raises Mean but degrades the lower tail. Audited group preferences improve Worst, Drop, and FIFR jointly, with matched-seed \mbox{provenance} adding the final gain.

\begin{center}
\begin{minipage}{\linewidth}
\centering
\scriptsize
\setlength{\tabcolsep}{2.0pt}
\renewcommand{\arraystretch}{0.72}
\begin{tabular}{@{}lcccc@{}}
\toprule
\textbf{Variant} & Mean$\uparrow$ & Worst$\uparrow$ & Drop$\downarrow$ & FIFR$\downarrow$ \\
\midrule
Reward-SFT & 48.58 & 43.82 & 5.97 & 13.11 \\ 
Multi-form prompt-DPO & 50.08 & 47.03 & 4.34 & 8.61 \\ 
\GCD{} + prompt-DPO & 50.82 & 48.64 & 3.28 & 6.19 \\ 
Direct worst-form DPO & 50.39 & 48.17 & 3.12 & 6.38 \\ 
Form-specific distillation & 49.93 & 47.51 & 3.69 & 7.19 \\ 
Dispersion-only FC-DPO & 50.69 & 48.35 & 3.58 & 6.91 \\ 
\GCD{}$^{\dagger}$ & 50.46 & 48.88 & 2.57 & 5.41 \\ 
Intent-audit-only FC-DPO$^{\dagger}$ & 50.94 & 49.09 & 2.61 & 5.02 \\ 
\method{} w/o matched-seed pairs$^{\dagger}$ & \second{51.06} & \second{49.28} & \second{2.46} & \second{4.87} \\ 
\grayrow \method{} (matched-seed prioritized)$^{\dagger}$ & \best{51.17} & \best{49.51} & \best{2.29} & \best{4.44} \\
\bottomrule
\end{tabular}
\captionof{table}{Mechanism ablation on the fixed development split; $\dagger$ denotes three paired runs.}
\label{tab:abl-train}
\end{minipage}
\end{center}

\subsection{Supervision Validity and Boundary Controls}

The blind audit in Table~\ref{tab:validity} separates semantic validity from dispersion: audited FC negatives reach 91/100 defect precision versus 58/100 for dispersion alone, showing that variation alone is not a reliable failure label. Auditor scores correlate with humans at $\rho=0.63$; Qwen3.6-27B gives 0.73 retained-set Jaccard and 90/100 precision. \method{} retains 96.5\%/95.8\% appearance/motion diversity and raises the counterfactual margin from 11.36 to 13.90, providing evidence against collapse.

\begin{center}
\scriptsize
\setlength{\tabcolsep}{2.2pt}
\renewcommand{\arraystretch}{0.76}
\begin{tabular}{@{}lccc@{}}
\toprule
\textbf{Mined type}
& \makecell{\textbf{Human}\\\textbf{precision}}
& \makecell{\textbf{Visual}\\\textbf{usability}}
& \makecell{\textbf{Independent}\\\textbf{agreement}} \\
\midrule
Group-consistent positive & 96/100 & 91/100 & 88/100 \\
Dispersion-only negative & 58/100 & 81/100 & 50/100 \\
\grayrow Audited FC negative & 91/100 & 79/100 & 84/100 \\
\bottomrule
\end{tabular}
\captionof{table}{Blind supervision audit; cells report valid counts out of $N=100$ candidates per type.}
\label{tab:validity}
\end{center}

\subsection{Human Evaluation}

We conduct a blinded pairwise comparison against OnlineVPO on 200 held-out intent groups. Three independent annotators compare anonymized video sets for intent faithfulness, cross-form consistency, visual quality, and overall preference; identities and side order are hidden, and ties are allowed. \method{} obtains a 33.33-point overall win--loss margin (95\% CI: 26.3--40.4; Krippendorff's $\alpha=0.60$), with the largest advantage on consistency and a positive quality margin, matching the automatic Worst and Drop trends.

\paragraph{Cross-backbone transfer.} On LTX-2.3 \cite{ltx23repo}, \GCD{} raises T2V-C/R-Worst from 44.79/41.86 to 47.57/44.79 and lowers R-Drop from 5.48 to 3.51. Adding \FCDPO{} further reaches 48.29/45.76 and 3.01, while StoryEval rises to 33.37, reproducing the stage-wise trend on a second backbone.

\section{Conclusion}

We present \method{}, a group-structured intent-alignment framework for robust text-to-video post-training. \method{} treats verified formulations of the same visual intent as a shared supervision unit, separates semantic validity from hardness ranking through a wording-independent audit, and trains the generator with \GCD{} and \FCDPO{}. The two stages play complementary roles: \GCD{} shares interchangeable valid realizations across forms, while \FCDPO{} sharpens the preference boundary against audited semantic failures. Across HunyuanVideo-1.5 and LTX-2.3, \method{} improves standard alignment and weakest-form reliability, reduces cross-form drop and mixed-success failures, and preserves visual quality, seed diversity, and sensitivity to genuine semantic changes. These results show that robustness to meaning-preserving prompt variation can be learned directly at the generator level, complementing inference-time prompt rewriting without adding deployment-time modules.

\setlength{\bibsep}{0pt}
\bibliography{references}

\clearpage
\appendix
\section*{Supplementary Material}
\begingroup
\setlength{\textfloatsep}{8pt plus 2pt minus 2pt}
\setlength{\floatsep}{7pt plus 2pt minus 2pt}
\setlength{\intextsep}{7pt plus 2pt minus 2pt}
\setlength{\abovecaptionskip}{4pt}
\setlength{\belowcaptionskip}{0pt}

This supplement covers prompt-form construction and verification, audited supervision synthesis, objective analysis, implementation and data-independence protocols, statistical and alternative-explanation controls, extended results and ablations, human evaluation, the main-case video manifest, and four additional qualitative comparisons.

\section{Prompt-Form Groups and Semantic Preservation}
\label{sec:prompt-groups}

\subsection{Visual intents and admissible forms}

A visual intent $z$ is represented by the unordered form set $\Pz=\{p_0,p_1,\ldots,p_K\}$ and a structured target $T_z$. The target records entities and counts, attributes and states, actions and participant roles, interactions, spatial relations, temporal order, and explicit scene or camera constraints. The forms may differ lexically and syntactically but must preserve every field in $T_z$; the group is therefore an equivalence class of visually testable content, not loosely related captions.

Training source intents are instantiated from a deterministic, non-benchmark structured schema balanced across attributes, relations, actions, interactions, numeracy, and multi-event composition. Gemini 3.5 Flash (\texttt{gemini-3.5-flash}) generates ten candidates per source with a capped refill for malformed outputs. Qwen3.6-27B independently verifies every candidate against the source-derived $T_z$. For benchmark extensions, generator and verifier roles are exchanged and their prompts are disjoint from the training compiler. The endpoint IDs, access dates, and decoding settings are frozen in the released configuration.

\begin{compactpromptbox}{Semantic-preserving prompt-form rewriting template}
\footnotesize
\textbf{Input and goal.} Given one T2V prompt, produce independently usable, linguistically diverse rewrites of the same visual event.

\textbf{Preserve.} Every entity and count, attribute/state, action and participant role, interaction, spatial relation, temporal order, scene constraint, and explicit camera/style/lighting/duration/text requirement.

\textbf{Reject.} Any added, removed, weakened, strengthened, disambiguated, or reversed visual commitment.

\textbf{Permitted variation.} Lexical substitution, clause reordering, role-preserving voice conversion, caption-like phrasing, and concise or expanded syntax without visual change.

\textbf{Output.} Return the source, each rewrite, and its transformation-family label in structured form.
\end{compactpromptbox}

The accepted transformation families are summarized in Table~\ref{tab:families}. They are used for audit and balance; the labels are not provided to the generator during training.

\begin{table}[H]
\centering
\small
\setlength{\tabcolsep}{3.4pt}
\begin{tabular}{@{}lp{0.61\linewidth}@{}}
\toprule
\textbf{Family} & \textbf{Allowed surface variation} \\
\midrule
Lexical & Synonyms and local wording changes without semantic strengthening \\
Syntactic & Clause reordering, attachment changes, or role-preserving voice conversion \\
Caption-like & Natural descriptive-caption syntax with the same commitments \\
Concise & Removal of linguistic redundancy while retaining every visual constraint \\
Relation-focused & Reordering that foregrounds an existing spatial or interaction relation \\
Action-focused & Reordering that foregrounds an existing action or event sequence \\
\bottomrule
\end{tabular}
\caption{Prompt-form transformation families. A candidate is rejected whenever its visual content changes, regardless of its nominal family.}
\label{tab:families}
\end{table}

\Needspace{10\baselineskip}
\subsection{Structured equivalence verification}

A rewrite is retained only when its relation to the source is unambiguous. Verification is field-wise rather than based on a single similarity score. This avoids accepting a fluent paraphrase that silently changes a count, swaps semantic roles, or modifies a spatial relation.

\begin{promptbox}{Visual-intent equivalence verification template}
\textbf{Task.} Compare an original text-to-video prompt with one candidate rewrite. Judge semantic preservation only; do not judge writing style or predicted video quality.

\textbf{Check independently.} Entities and exact counts; attributes and states; actions; agent--patient roles; interactions; spatial relations; temporal order; scene constraints; explicit camera/style/lighting/duration/text constraints; and added, removed, or altered commitments.

\textbf{Decision rule.} Reject if any critical field differs, if a new visual commitment is introduced, if a required commitment disappears, or if a previously explicit relation becomes ambiguous. Treat an uncertain critical field as rejection rather than guessing.

\textbf{Output structure.} Return a status for every field, lists of added/removed/changed commitments, a final accept/reject decision, and a concise reason.
\end{promptbox}

\begin{analysisbox}{Illustrative audit example}
For the source intent ``A dog jumps over a low wooden fence in a grassy field,'' a rewrite may move the setting to the beginning or use a role-preserving paraphrase such as ``In a grassy field, a dog clears a short wooden fence.'' A rewrite is rejected if it introduces a tracking camera, changes the fence height, adds a person, replaces \emph{over} with \emph{through}, or weakens the jump into generic movement. This example illustrates the field-wise verification policy.
\end{analysisbox}

Near-duplicate accepted forms are removed before group formation. Splits are created at the source-intent level before rewriting, so no forms of one intent can cross partitions. Benchmark-derived groups are excluded from training, threshold calibration, and checkpoint selection. Table~\ref{tab:rewrite-funnel} reports the compiler funnel. From 53,487 generated candidates, 2,713 malformed or near-duplicate outputs are removed and 34,318 pass field-wise verification; diversity-aware selection retains four rewrites for each of 5,000 complete training groups.

\begin{table}[H]
\centering
\small
\setlength{\tabcolsep}{4pt}
\begin{tabular}{@{}lrr@{}}
\toprule
\textbf{Compiler stage} & \textbf{Count} & \textbf{Rate (\%)} \\
\midrule
\RewriteFunnelRowsVThirtySix
\bottomrule
\end{tabular}
\caption{Prompt-form generation and verification funnel. Candidate-stage rates use 53,487 as the denominator; complete-group coverage is reported over 5,000 source intents.}
\label{tab:rewrite-funnel}
\end{table}

We additionally conduct a stratified human audit of 300 accepted source--rewrite pairs, with three independent annotators and majority vote. The sample contains 200 training pairs and 100 benchmark pairs. Table~\ref{tab:equivalence-audit} reports the field-wise breakdown; Table~\ref{tab:equivalence-source} separates the two sources. A second audit samples 100 complete benchmark groups and asks whether all four rewrites preserve the common $T_z$. Ninety-seven groups pass in full. Excluding the three flagged groups changes every headline Worst, Drop, and FIFR result by at most 0.11 points, indicating that the benchmark conclusions are not driven by the audited failures.

\begin{table}[H]
\centering
\small
\setlength{\tabcolsep}{4pt}
\begin{tabular}{@{}lcc@{}}
\toprule
\textbf{Critical field} & \textbf{Equivalent / total} & \textbf{Precision (\%)} \\
\midrule
\EquivalenceAuditRowsVThirtySix
\bottomrule
\end{tabular}
\caption{Human semantic-equivalence audit of accepted prompt forms.}
\label{tab:equivalence-audit}
\end{table}

\begin{table}[H]
\centering
\small
\setlength{\tabcolsep}{4pt}
\resizebox{\linewidth}{!}{%
\begin{tabular}{@{}lcc@{}}
\toprule
\textbf{Audit subset} & \textbf{Equivalent / total} & \textbf{Precision (\%)} \\
\midrule
\EquivalenceSourceRowsVThirtySeven
\bottomrule
\end{tabular}}
\caption{Source-specific pair audit and complete-group benchmark audit. Group success requires all four rewrites to preserve the shared target.}
\label{tab:equivalence-source}
\end{table}
\FloatBarrier

\section{Formal Objective Analysis}
\label{sec:objective-analysis}

This section analyzes three properties of the \method{} objectives: positive coverage under group-consistent mining, contraction induced by shared-target GCD, and the conservative form-wise preference surrogate optimized by FC-DPO.

\subsection{Group-structured supervision compiler}
\label{sec:mining-theory}

Let $L=K+1$ be the number of verified forms. The same intent-specific seed set $\mathcal S_z$ is reused across forms, and mining pools every candidate in $\mathcal P(z)\times\mathcal S_z$. For a candidate $x$, define its cross-form scoring profile
\begin{equation}
\begin{aligned}
\mathbf{s}(x;z)&=\big(A(x,p_0),\ldots,A(x,p_K)\big),\\[-2pt]
\mu(x)&=L^{-1}\sum_i A_i(x),\qquad w(x)=\min_i A_i(x),\\[-2pt]
b(x)&=\max_i A_i(x),\qquad d(x)=b(x)-w(x).
\end{aligned}
\label{eq:groupstats}
\end{equation}
\subsection{Surface-form-independent structured intent auditor}
\label{sec:intent-auditor}

The primary auditor is Gemini 3.5 Flash (\texttt{gemini-3.5-flash}) \cite{gemini35flash}; \texttt{Qwen/Qwen3.6-27B} \cite{qwen36} is reserved for held-out replacement. Each 121-frame clip is represented by 24 uniformly sampled, temporally ordered RGB frames resized to $448\!\times\!252$. The structured target is deterministically serialized into atomic commitments $T_z=\{c_{z,1},\ldots,c_{z,J_z}\}$ with field tags for entity/count, attribute/state, spatial relation, action--participant binding, interaction, temporal order, and explicit scene or camera constraints. Prompt strings, generating-form identity, and mining scores are withheld.

\paragraph{Score call.} Each shortlisted candidate--commitment pair is evaluated by a fresh, stateless REST request whose output is restricted by a string-enum schema to one label token: \texttt{A} (satisfied), \texttt{B} (violated), or \texttt{C} (uncertain). We pin \texttt{thinkingLevel=minimal}, \texttt{candidateCount=1}, \texttt{maxOutputTokens=1}, \texttt{seed=202607}, \texttt{responseLogprobs=true}, and \texttt{logprobs=3}. In accordance with the Gemini 3.5 model guidance, \texttt{temperature}, \texttt{topP}, and \texttt{topK} are omitted rather than overridden; the stable model ID, complete request payload, and access date are archived with the code. Safety thresholds are \texttt{BLOCK\_ONLY\_HIGH} for the four adjustable categories: harassment, hate speech, sexually explicit content, and dangerous content. We renormalize the returned first-step probabilities over the token IDs for \texttt{A/B/C}. Let $s_j=P(\texttt{A})$ and $v_j=P(\texttt{B})$. Required visually testable fields form $\mathcal K_z$; we compute
\begin{equation}
\begin{aligned}
C_{\min}&=\min_{j\in\mathcal K_z}s_j,
& C_{\mathrm{avg}}&=J_z^{-1}\sum_j s_j,\\[-2pt]
V_{\max}&=\max_{j\in\mathcal K_z}v_j.&&
\end{aligned}
\end{equation}
A positive requires $C_{\min}\ge0.70$ and $C_{\mathrm{avg}}\ge0.80$; a negative requires $V_{\max}\ge0.75$. Other candidates abstain.

\begin{promptbox}{Atomic score query}
\footnotesize
You are given 24 temporally ordered frames and one atomic visual commitment. Judge only visible evidence. Return exactly one enum token: \texttt{A} if satisfied, \texttt{B} if violated, or \texttt{C} if uncertain. Do not provide any additional text.
\end{promptbox}

\paragraph{Diagnostic call.} A separate request may return a JSON object containing the categorical label, supporting frame indices, and a brief rationale. Diagnostic outputs support manual inspection and error analysis; they do not enter $C_{\min}$, $C_{\mathrm{avg}}$, $V_{\max}$, threshold calibration, supervision construction, or checkpoint selection.

The thresholds are selected on a disjoint 600-video, three-annotator calibration split stratified by semantic field, maximizing retained coverage subject to at least 90\% human precision for accepted positives and accepted semantic failures. Endpoint, frame sampling, label protocol, generation parameters, safety settings, field serialization, and thresholds are then frozen. The primary auditor is excluded from GroupScore-weight tuning, benchmark evaluation, and checkpoint selection. Qwen3.6-27B receives the identical frames and atomic commitments and is used only in the replacement analysis.

The alignment scores are normalized before aggregation. Because all forms share $T_z$, $d(x)$ is scoring-view dispersion rather than a form-dependent semantic label. The implemented GroupScore is
\begin{equation}
S_{\mathrm{grp}}(x,z)
=\lambda_\mu\mu(x)+\lambda_w w(x)-\lambda_d d(x)+\lambda_q Q(x).
\label{eq:groupscore-supp}
\end{equation}
The non-audit parts of the positive and negative rules define
\begin{align}
\mathcal U_z^+&=\{x:w\ge .62,\ d\le .18,\ Q\ge .55\},\nonumber\\[-2pt]
\mathcal U_z^-&=\{x:b\ge .58,\ d\ge .12,\ Q\ge .50\}.
\label{eq:shortlist-supp}
\end{align}
The auditor is invoked only on the deduplicated union $\mathcal U_z=\mathcal U_z^+\cup\mathcal U_z^-$. This preserves the final candidate set: a rollout outside the corresponding shortlist cannot satisfy the complete positive or negative rule for any audit outcome.

\method{} separates \emph{semantic typing} from \emph{hardness ranking}. With all components normalized to $[0,1]$, the implemented operators are
{\small
\begin{align}
\Gamma^+(\phi_z(x))
&=\1[x\in\mathcal U_z^+,\ C_{\min}\!\ge .70,\ C_{\mathrm{avg}}\!\ge .80],\\[-2pt]
\Gamma^{\mathrm{FC}-}(\phi_z(x))
&=\1[x\in\mathcal U_z^-,\ V_{\max}\!\ge .75].
\end{align}
}
The shortlist fixes necessary geometry and quality conditions, the audit defines semantic type, and GroupScore ranks candidates only within type. Among audit-confirmed failures satisfying $d\ge .12$, $\operatorname{Top}_{m_-}$ retains the highest-GroupScore candidates: visually plausible hard negatives rather than the most degraded or highest-dispersion failures. Every rollout stores its generating form and seed. The compiler consumes same-seed, different-form success--failure pairs first and fills the remaining budget with top-gap audited fallback pairs. ``Form-contrastive'' describes generating-form provenance, not form-dependent correctness of a fixed video. The clipped GroupScore gap becomes $\omega_{z,r}$.

Formally, let $g(x^+,x^-;z)=S_{\mathrm{grp}}(x^+,z)-S_{\mathrm{grp}}(x^-,z)$. The set $\Pi_z^{\mathrm{match}}$ contains same-seed, different-form pairs $(x_{j,s}^+,x_{i,s}^-)$ with $i\ne j$, $x_{j,s}^+\in\mathcal X_z^+$, $x_{i,s}^-\in\mathcal X_z^{\mathrm{FC}-}$, and $g\ge\tau_{\mathrm{pair}}$. With top-gap fallback set $\Pi_z^{\mathrm{fallback}}$, the full set is $\Pi_z=\Pi_z^{\mathrm{match}}\cup\Pi_z^{\mathrm{fallback}}$; for each $r\in\Pi_z$, $g_{z,r}=g(x_{z,r}^+,x_{z,r}^-;z)$ and $\omega_{z,r}=\clip(g_{z,r},\omega_{\min},\omega_{\max})$. We use $\tau_{\mathrm{pair}}=0.15$ and clip weights to $[0.05,1.00]$.

\begin{algorithm}[H]
\caption{Audited Prompt-Form Supervision Compiler}
\label{alg:compiler}
\scriptsize
\begin{algorithmic}[1]
\Require Verified group $\mathcal P(z)$; pool $\mathcal X_z$; score matrix $\mathbf A_z$; limits $m_+,m_-$; pair margin $m$
\Ensure Shared-target records $\mathcal D_{\mathrm{GCD}}^z$ and preference records $\mathcal D_{\mathrm{FC}}^z$
\State Form $\mathcal U_z^+$ and $\mathcal U_z^-$ from the non-audit score/quality conditions
\For{each candidate $x\in\mathcal U_z^+\cup\mathcal U_z^-$}
    \State Audit $T_z$ to obtain $(C_{\min},C_{\mathrm{avg}},V_{\max})$; form $\phi_z(x)$
\EndFor
\State $\mathcal C_z^+\gets\{x\in\mathcal U_z^+:C_{\min}\ge\tau_{\min}^+,\ C_{\mathrm{avg}}\ge\tau_{\mathrm{avg}}^+\}$
\State $\mathcal C_z^{\mathrm{FC}-}\gets\{x\in\mathcal U_z^-:V_{\max}\ge\tau_{\mathrm{crit}}^-\}$
\State $\mathcal X_z^+\gets\operatorname{Top}_{m_+}(\mathcal C_z^+;S_{\mathrm{grp}})$
\State $\mathcal X_z^{\mathrm{FC}-}\gets\operatorname{Top}_{m_-}(\mathcal C_z^{\mathrm{FC}-};S_{\mathrm{grp}})$
\State $\mathcal D_{\mathrm{GCD}}^z\gets\{(\mathcal P(z),x^+):x^+\in\mathcal X_z^+\}$
\State $\mathcal D_{\mathrm{FC}}^z\gets\varnothing$
\For{each matched-seed pair, then each audited fallback pair}
    \State $g\gets S_{\mathrm{grp}}(x^+,z)-S_{\mathrm{grp}}(x^-,z)$
    \If{$g\geq m$}
        \State $\omega\gets\clip(g,\omega_{\min},\omega_{\max})$
        \State Add $(\mathcal P(z),x^+,x^-,\omega)$ to $\mathcal D_{\mathrm{FC}}^z$
    \EndIf
\EndFor
\State \Return $(\mathcal D_{\mathrm{GCD}}^z,\mathcal D_{\mathrm{FC}}^z)$
\end{algorithmic}
\end{algorithm}

\subsection{Compiler yield and pair provenance}
The geometry/quality pre-screen sends only the union $\mathcal U_z^+\cup\mathcal U_z^-$ to the structured auditor; candidates satisfying both branches are deduplicated before atomic queries. Across the 5,000 training intents, 4,371 provide at least one audited positive and 3,228 provide at least one audited negative. The compiler produces at least one preference pair for 3,069 intents; 1,302 additional intents contribute only to GCD. Among 5,907 retained preference pairs, 71.7\% are same-seed, different-form pairs and 28.3\% are audited fallback pairs.

\begin{table}[H]
\centering
\small
\setlength{\tabcolsep}{6pt}
\begin{tabular}{@{}lrr@{}}
\toprule
\textbf{Compiler outcome} & \textbf{Count} & \textbf{Rate (\%)} \\
\midrule
\SupervisionYieldRowsVThirtyFour
\bottomrule
\end{tabular}
\caption{Supervision yield. Intent rates use 5,000 source intents; pair-provenance rates use 5,907 retained preference pairs.}
\label{tab:compiler-yield}
\end{table}

\begin{table}[H]
\centering
\scriptsize
\setlength{\tabcolsep}{3.2pt}
\begin{tabular}{@{}lrrrr@{}}
\toprule
\textbf{Commitments} & $N$ & \textbf{Positive} & \textbf{Negative} & \textbf{Pair} \\
\midrule
\CommitmentYieldRowsVThirtySeven
\bottomrule
\end{tabular}
\caption{Supervision yield by the number of atomic commitments. Cells report count (rate \%). Min/max audit aggregation is deliberately more conservative for longer structured targets.}
\label{tab:commitment-yield}
\end{table}
\FloatBarrier

\begin{proposition}[Coverage of an eligible positive]
\label{prop:positive-coverage}
If a retained candidate $x^+$ satisfies $w(x^+)\geq\tau_w$ and $d(x^+)\leq\tau_d$, then
\begin{equation}
\begin{aligned}
A(x^+,p_i)&\geq\tau_w &&\forall i,\\[-2pt]
|A(x^+,p_i)-A(x^+,p_j)|&\leq\tau_d &&\forall i,j.
\end{aligned}
\label{eq:positive-coverage}
\end{equation}
\end{proposition}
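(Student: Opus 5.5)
The argument works directly from the definitions of the cross-form profile in \eqref{eq:groupstats}; no earlier result is needed beyond these definitions. Fix a retained candidate $x^+$ and write $a_i=A(x^+,p_i)$ for $i=0,\ldots,K$. This is a finite list of $L=K+1$ real numbers. By definition, $w(x^+)=\min_i a_i$, $b(x^+)=\max_i a_i$, and $d(x^+)=b(x^+)-w(x^+)$. The two claimed inequalities then follow from elementary order properties of the minimum and maximum of a finite set.

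For the first inequality, I use that the minimum is a lower bound for every element. For each $i$ we have $a_i\ge\min_k a_k=w(x^+)\ge\tau_w$, which gives $A(x^+,p_i)\ge\tau_w$ for all $i$.

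For the second inequality, fix any pair $i,j$. Both $a_i$ and $a_j$ lie in the interval $[w(x^+),b(x^+)]$, because the minimum lower-bounds and the maximum upper-bounds each entry. Therefore
\[
|a_i-a_j| = \max(a_i,a_j)-\min(a_i,a_j) \le b(x^+)-w(x^+) = d(x^+) \le \tau_d .
\]
The case $i=j$ is trivial.

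No step here is a real obstacle. The only point that needs a remark is that the argument applies pointwise to a single fixed video $x^+$, with all $A_i$ evaluated on that video. Because of this, the normalization of scores and the audit conditions play no role. In particular, with $\tau_w=.62$ and $\tau_d=.18$, the conclusion applies to every element of $\mathcal U_z^+$, and hence to every retained positive in $\mathcal X_z^+$.
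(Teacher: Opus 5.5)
Your proof is correct and is essentially the paper's argument: every score is bounded below by the minimum $w$, and all scores lie in $[w,b]$, whose diameter is $b-w=d$. Your closing remark is also accurate: since $\mathcal X_z^+\subseteq\mathcal C_z^+\subseteq\mathcal U_z^+$, the conclusion with $\tau_w=.62$ and $\tau_d=.18$ holds for every retained positive.
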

\begin{proof}
Every score is at least the minimum $w$. All scores lie in $[w,b]$, whose diameter is $b-w=d$.
\end{proof}
This is the property required for target sharing: one favorable wording cannot by itself admit a positive. Mean-only ranking cannot provide this guarantee; for example, $(\mu,\mu)$ and $(\mu-\delta,\mu+\delta)$ have the same mean but different worst scores and drops.

For an already audit-confirmed negative, a large-dispersion profile identifies a hard case rather than defining its semantic status. If $b(x^-)\geq\tau_b$ and $d(x^-)\geq\tau_d^-$, then there exist scoring forms $p_h,p_\ell$ such that
\begin{equation}
A(x^-,p_h)\geq\tau_b,
\qquad
A(x^-,p_\ell)\leq A(x^-,p_h)-\tau_d^-.
\label{eq:negative-contrast}
\end{equation}
Thus the rejected set contains visually plausible, independently confirmed semantic failures with uneven scoring evidence, rather than universally poor videos or raw evaluator disagreement. The elementary bound
\begin{equation}
w(x)\geq\mu(x)-d(x)
\label{eq:worst-lower-bound}
\end{equation}
also shows why improving the mean while reducing drop strengthens the lower tail.

\subsection{Group-Consistent Distillation}
\label{sec:gcd-theory}

Each intent retains a positive set $\mathcal X_z^+$, and each record samples $x^+\sim q_z^+$ from that set. For time $t$ and form $p_i$, let $h_i=v_\theta(x_t^+,t,p_i)$ and $u=u_t(x^+)$. With normalized weights $\alpha_i>0$, the per-instance GCD loss is
\begin{equation}
\mathcal{L}_{\mathrm{GCD}}(x^+,z,t)
=\sum_i\alpha_i\|h_i-u\|_2^2.
\label{eq:gcd-instance}
\end{equation}

\begin{proposition}[Shared-target decomposition]
\label{prop:weighted-decomposition}
Let $\bar h_\alpha=\sum_i\alpha_i h_i$. Then
\begin{align}
\mathcal{L}_{\mathrm{GCD}}
&=\|\bar h_\alpha-u\|_2^2
+\sum_i\alpha_i\|h_i-\bar h_\alpha\|_2^2
\label{eq:weighted-decomp}\\
&=\|\bar h_\alpha-u\|_2^2
+\frac{1}{2}\sum_{i,j}\alpha_i\alpha_j\|h_i-h_j\|_2^2.
\label{eq:pairwise-decomp}
\end{align}
\end{proposition}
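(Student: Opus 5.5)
The plan is to prove the two identities separately. Both come from the fact that the $\alpha_i$ are normalized, i.e.\ $\sum_i\alpha_i=1$. This makes $\bar h_\alpha$ a genuine convex combination of the form-wise predictions $h_i$, so the decomposition is the weighted bias--variance identity.

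For \eqref{eq:weighted-decomp}, I will write $h_i-u=(h_i-\bar h_\alpha)+(\bar h_\alpha-u)$ and expand the squared norm. This gives
\[
\|h_i-u\|_2^2=\|h_i-\bar h_\alpha\|_2^2+\|\bar h_\alpha-u\|_2^2+2\langle h_i-\bar h_\alpha,\bar h_\alpha-u\rangle .
\]
Next I multiply by $\alpha_i$ and sum over $i$. The term $\|\bar h_\alpha-u\|_2^2$ does not depend on $i$, so it picks up the factor $\sum_i\alpha_i=1$. The cross term equals $2\langle\sum_i\alpha_i h_i-\bar h_\alpha,\;\bar h_\alpha-u\rangle$, and this vanishes because $\sum_i\alpha_i h_i-\bar h_\alpha=\bar h_\alpha-\bar h_\alpha=0$. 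What remains is exactly \eqref{eq:weighted-decomp}.

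For \eqref{eq:pairwise-decomp}, it suffices to show
\[
\sum_i\alpha_i\|h_i-\bar h_\alpha\|_2^2=\tfrac12\sum_{i,j}\alpha_i\alpha_j\|h_i-h_j\|_2^2 .
\]
I will expand the right-hand side as
\[
\tfrac12\sum_{i,j}\alpha_i\alpha_j\big(\|h_i\|^2+\|h_j\|^2-2\langle h_i,h_j\rangle\big).
\]
Using $\sum_j\alpha_j=1$, the two squared-norm terms each reduce to $\sum_i\alpha_i\|h_i\|^2$, so together with the factor $\tfrac12$ they contribute $\sum_i\alpha_i\|h_i\|^2$. The inner-product term equals $-\|\bar h_\alpha\|^2$. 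Hence the right-hand side is $\sum_i\alpha_i\|h_i\|^2-\|\bar h_\alpha\|^2$.

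On the left-hand side, expanding gives $\sum_i\alpha_i\|h_i\|^2-2\langle\bar h_\alpha,\bar h_\alpha\rangle+\|\bar h_\alpha\|^2$, which is the same quantity. Substituting this into \eqref{eq:weighted-decomp} yields \eqref{eq:pairwise-decomp}.

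There is no real obstacle here. The only point needing care is to invoke normalization $\sum_i\alpha_i=1$ explicitly, both in killing the cross term and in collapsing the double sums. Without it, the first term would carry a factor $\sum_i\alpha_i$ and the pairwise identity would fail. I will also remark that the uniform choice $\alpha_i=1/(K+1)$ recovers the per-record objective in \eqref{eq:gcd}.
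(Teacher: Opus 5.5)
Your proof is correct and follows the paper's own argument: you use the same splitting $h_i-u=(h_i-\bar h_\alpha)+(\bar h_\alpha-u)$, the cross term vanishes because $\sum_i\alpha_i(h_i-\bar h_\alpha)=0$, and the second equality is the weighted pairwise-variance identity, which you verify by direct expansion where the paper only cites it. Your explicit reliance on $\sum_i\alpha_i=1$ matches the paper's hypothesis of ``normalized weights.''
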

\begin{proof}
Write $h_i-u=(h_i-\bar h_\alpha)+(\bar h_\alpha-u)$ and expand the weighted squared norm. The cross term vanishes because $\sum_i\alpha_i(h_i-\bar h_\alpha)=0$; the second equality is the weighted pairwise-variance identity.
\end{proof}

The first term fits the group-average prediction to the sampled valid target, while the second controls disagreement among verified forms. Sharing one target within a record therefore induces pairwise consistency, but sampling multiple retained positives across records does not define a unique canonical rendering. For uniform weights, $\partial\mathcal{L}_{\mathrm{GCD}}/\partial h_i=2(h_i-u)/L$; a prediction-space step contracts each pairwise difference by $1-2\eta/L$ when $0<\eta<L$.

Group-consistent mining is essential to this construction. If $x^+$ were selected only by its mean score, a low-scoring form could be hidden by favorable forms and target sharing would transfer a semantically unsuitable video to that form. Proposition~\ref{prop:positive-coverage} removes precisely this failure mode.

\subsection{Form-Contrastive Flow-DPO}
\label{sec:dpo-theory}

Let $\Delta_i(\theta)$ be the positive--negative flow-loss margin under form $p_i$, measured relative to the frozen post-GCD reference model. Define
\begin{equation}
\phi_\beta(\Delta)=-\log\sigma(\beta\Delta)
=\log(1+e^{-\beta\Delta}),
\end{equation}
and, for a mined pair,
\begin{equation}
\mathcal{L}_{\mathrm{grp}}(z,x^+,x^-)
=\omega_{z,r}\sum_i\alpha_i\phi_\beta(\Delta_i).
\label{eq:group-dpo-loss}
\end{equation}
All experiments use $\alpha_i=1/L$ and $\beta=\beta_{\mathrm{DPO}}$.

\begin{proposition}[Conservative per-form surrogate]
\label{prop:jensen}
Because $\phi_\beta$ is convex,
\begin{equation}
\phi_\beta\!\left(\sum_i\alpha_i\Delta_i\right)
\leq\sum_i\alpha_i\phi_\beta(\Delta_i).
\label{eq:jensen}
\end{equation}
\end{proposition}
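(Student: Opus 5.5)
The plan is to establish convexity of $\phi_\beta$ directly and then apply the finite form of Jensen's inequality with the normalized weights $\alpha_i$. First, I will record that the weights are admissible for Jensen. They satisfy $\alpha_i>0$ and $\sum_i\alpha_i=1$; in all experiments this is $\alpha_i=1/L$. The margins $\Delta_i\in\mathbb R$ are arbitrary reals, so no domain restriction arises.

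Second, I will verify convexity by differentiating $\phi_\beta(\Delta)=\log(1+e^{-\beta\Delta})$ twice. The first derivative is
\begin{equation*}
\phi_\beta'(\Delta)=-\beta\,\sigma(-\beta\Delta).
\end{equation*}
Differentiating again gives
\begin{equation*}
\phi_\beta''(\Delta)=\beta^2\sigma(\beta\Delta)\bigl(1-\sigma(\beta\Delta)\bigr)\geq 0.
\end{equation*}
This holds for every real $\beta$ and in particular for $\beta=\beta_{\mathrm{DPO}}>0$. A twice-differentiable function on $\mathbb R$ with nonnegative second derivative is convex. As a backup that avoids calculus, $\phi_\beta$ is a composition of the log-sum-exp map $u\mapsto\log(e^0+e^u)$, which is convex, with the affine map $\Delta\mapsto-\beta\Delta$, and such a composition is convex.

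Third, Jensen's inequality for a convex function and a finite probability vector gives $\phi_\beta(\sum_i\alpha_i\Delta_i)\le\sum_i\alpha_i\phi_\beta(\Delta_i)$. This is exactly \eqref{eq:jensen}. If a self-contained argument is preferred, it follows by induction on $L$ from the two-point definition of convexity: split off $\alpha_L$ and renormalize the remaining weights.

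There is no real obstacle here. The only step needing care is the sign bookkeeping in the second derivative: the factors of $-\beta$ square to $\beta^2$, and $\sigma(1-\sigma)$ is symmetric under $\Delta\mapsto-\Delta$. After the proof, I will remark on the interpretation. Multiplying by $\omega_{z,r}>0$ preserves the inequality, so \eqref{eq:group-dpo-loss} upper-bounds the pooled-margin loss. Strict convexity, which holds because $\phi_\beta''>0$ when $\beta\neq0$, makes the inequality strict unless all $\Delta_i$ coincide. The per-form objective therefore penalizes form-wise margin imbalance, which is the sense in which it is conservative.
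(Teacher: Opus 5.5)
Your proposal is correct and follows the paper's proof. Like the paper, you verify convexity through $\phi_\beta''(\Delta)=\beta^2\sigma(\beta\Delta)\sigma(-\beta\Delta)\ge 0$ and then apply finite Jensen with the normalized weights $\alpha_i$; your additional remarks (log-sum-exp backup, strictness unless all $\Delta_i$ coincide, invariance under the positive weight $\omega_{z,r}$) are valid but not needed.
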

\begin{proof}
$\phi_\beta''(\Delta)=\beta^2\sigma(\beta\Delta)\sigma(-\beta\Delta)>0$, so Jensen's inequality applies.
\end{proof}

Averaging the nonlinear loss over forms is therefore stricter than applying one logistic loss to the average margin: an easy form cannot cancel a failed form before the nonlinearity. The form-wise gradient
\begin{equation}
\frac{\partial\mathcal{L}_{\mathrm{grp}}}{\partial\Delta_i}
= -\omega_{z,r}\alpha_i\beta\sigma(-\beta\Delta_i)
\label{eq:dpo-margin-gradient}
\end{equation}
has larger magnitude for smaller margins, so unresolved forms automatically receive stronger updates. The clipped pair weight $\omega_{z,r}$ reflects mining confidence while bounding the influence of unusually large GroupScore gaps.

\subsection{Complementarity of the two stages}
\label{sec:two-stage}

\begin{analysisbox}{Two distinct controls}
\textbf{GCD} controls conditional disagreement around positives sampled from a valid support set. It contains no rejected sample and does not establish a preference boundary. \textbf{FC-DPO} orders positives above independently audited negatives under every form; matched seeds reduce stochastic confounding. The stages therefore address complementary failure modes: shared positive support first, explicit rejection of verified hard failures second.
\end{analysisbox}

The analysis above characterizes the empirical objectives over the finite verified form group associated with each intent. Robustness to broader language variation is evaluated empirically through the natural-caption controls.

\section{Computational Cost and Implementation Details}
\label{sec:algorithm}

The complete supervision-construction and two-stage optimization flow is summarized in Algorithm~1 of the main paper. Here we provide the computational cost, frozen protocol settings, and implementation details needed for reproduction.

For $M$ candidates and $L$ prompt forms, cross-scoring requires $ML$ prompt-conditioned evaluator calls. Computing the statistics is $O(ML)$, and sorting candidates is $O(M\log M)$ per intent. These quantities are computed during supervision construction and can be cached. The optimization stages use fixed targets and pairs, so their per-step complexity remains comparable to the corresponding flow-matching or flow-DPO update.

\subsection{Implementation configuration}

\begin{table*}[!t]
\centering
\footnotesize
\renewcommand{\arraystretch}{0.90}
\begin{minipage}[t]{0.49\textwidth}
\begin{tabular}{@{}>{\raggedright\arraybackslash}p{0.33\linewidth}>{\raggedright\arraybackslash}p{0.61\linewidth}@{}}
\toprule
\textbf{Item} & \textbf{Data and generation configuration} \\
\midrule
Main / transfer backbone & HunyuanVideo-1.5 / LTX-2.3 (22B dev) \\
Trainable component & Video DiT only; text encoders and VAE frozen \\
Training intents & 5,000 \\
Prompt-form groups & One source form and four verified semantic-preserving rewrites per intent \\
Generation seeds & Four intent-specific seeds reused across all verified forms \\
Initial rollout pool & 100,000 videos ($5{,}000$ intents $\times$ 5 forms $\times$ 4 seeds) \\
Hunyuan benchmark generation & 480p, 16:9, 121 frames at 24 fps, 50 sampling steps \\
Hunyuan qualitative generation & 720p, 16:9, 121 frames at 24 fps, 50 sampling steps \\
LTX-2.3 transfer generation & 480p, 16:9, 121 frames at 24 fps, 30 sampling steps \\
StoryEval generation & Official 10s HunyuanVideo evaluation setting \\
\bottomrule
\end{tabular}
\end{minipage}
\hfill
\begin{minipage}[t]{0.49\textwidth}
\begin{tabular}{@{}>{\raggedright\arraybackslash}p{0.33\linewidth}>{\raggedright\arraybackslash}p{0.61\linewidth}@{}}
\toprule
\textbf{Item} & \textbf{Mining and optimization configuration} \\
\midrule
Retained positives / negatives & Up to two audited positives and two audited hard negatives per intent \\
Intent audit & Gemini 3.5 Flash enum score call; \texttt{Qwen/Qwen3.6-27B} replacement; frozen thresholds \\
Eligibility & $C_{\min}/C_{\mathrm{avg}}/w/d/Q$ for positives; $V_{\max}/b/d/Q$ for negatives \\
Pairing & Margin 0.15; matched-seed pairs prioritized; weights clipped to $[0.05,1.00]$ \\
Initialization & GCD from step 2000; FC-DPO from fixed step-2400 GCD checkpoint \\
Updates / learning rates & 400 / $5\times10^{-6}$ (GCD); 600 / $2\times10^{-6}$ (FC-DPO) \\
Optimizer / precision & AdamW, $(\beta_1,\beta_2)=(0.9,0.95)$, $\epsilon=10^{-8}$, wd 0.01; bf16 \\
Batch / schedule & Global 48; no accumulation; clip 1.0; 50-step warmup then constant LR \\
DPO coefficients & $\beta_{\mathrm{DPO}}=0.3$; chosen-positive auxiliary coefficient 0.05 \\
Checkpointing & Save every 300 updates; fixed terminal checkpoints, no test/qualitative selection \\
Trainable parameters & Full video DiT; text encoders and VAE frozen; no LoRA \\
\bottomrule
\end{tabular}
\end{minipage}
\caption{Core implementation configuration. Compared methods use matched generation settings and total sampling budgets whenever permitted by the method definition.}
\label{tab:impl}
\end{table*}

The transfer experiment uses the official \texttt{ltx-2.3-22b-dev.safetensors} checkpoint \cite{ltx23repo}.

\paragraph{Software environment.}
The Linux software stack comprised Python 3.11.11, PyTorch 2.10.0, Transformers 5.2.0, Diffusers 0.37.0, Accelerate 1.13.0, NumPy 1.26.4, and PyYAML 6.0.3.

Per-form mining alignment and quality are computed as
\begin{align}
A_i(x)&=0.6R_{\mathrm{VQA}}(x,p_i)+0.4R_{\mathrm{VS}}(x,p_i),\\
Q(x)&=0.6R_{\mathrm{DOVER}}(x)+0.4R_{\mathrm{motion}}(x).
\end{align}
After score normalization, the current GroupScore coefficients are
\begin{equation}
(\lambda_\mu,\lambda_w,\lambda_d,\lambda_q)
=(0.50,0.25,0.15,0.10),
\end{equation}
with the drop term entering negatively. These scores construct supervision; reported benchmark values are recomputed with the corresponding benchmark evaluators.

For a prompt form $p_i$ and a mined pair $(x^+,x^-)$, the flow-DPO margin is
\begin{equation}
\begin{aligned}
\Delta_i(\theta)
&=\ell_{\mathrm{FM}}(x^-,p_i;\theta)-\ell_{\mathrm{FM}}(x^+,p_i;\theta)\\
&\quad-\ell_{\mathrm{FM}}(x^-,p_i;\theta_{\mathrm{ref}})
+\ell_{\mathrm{FM}}(x^+,p_i;\theta_{\mathrm{ref}}).
\end{aligned}
\end{equation}
where $\theta_{\mathrm{ref}}$ is the frozen post-GCD checkpoint. The per-form update is
\begin{align}
\ell_{\mathrm{FC}}(p_i,x^+,x^-)
&=-\log\sigma\!\left(\beta_{\mathrm{DPO}}\Delta_i(\theta)\right)\nonumber\\[-2pt]
&\quad+\lambda_{\mathrm{aux}}\ell_{\mathrm{FM}}(x^+,p_i;\theta).
\end{align}
with $\beta_{\mathrm{DPO}}=0.3$ and $\lambda_{\mathrm{aux}}=0.05$. The auxiliary term is chosen-only: it stabilizes the preferred support without directly fitting the rejected video.

For GCD, one sampled time $t$ and one positive corruption are shared across all forms of a record, so the update isolates prompt conditioning while retaining a common target. For FC-DPO, the preferred and rejected branches share the same sampled time but use independent Gaussian corruptions. The policy and frozen reference evaluate the same corrupted input within each branch. This preserves a matched policy--reference comparison without coupling the preferred and rejected gradients through identical noise.

\subsection{Protocol-level compute accounting}
Table~\ref{tab:compute-cost} compares the per-run execution pattern under the matched 100,000-video rollout budget used in the main experiments. Video generation is the dominant shared component. VideoDPO applies single-form scoring and one preference-training stage to the fixed pool; OnlineVPO interleaves on-policy generation, scoring, and policy/reference updates; \method{} performs cross-form scoring once, audits only the union shortlist, and then runs \GCD{} and \FCDPO{} offline. Because the stages have different execution patterns, we report matched workloads and execution structure rather than a single aggregate runtime metric. Repeated runs used for variance estimation are excluded from this per-run protocol comparison.
\begin{table*}[t]
\centering
\small
\setlength{\tabcolsep}{4.2pt}
\renewcommand{\arraystretch}{0.96}
\begin{tabular}{@{}lclll@{}}
\toprule
\textbf{Method} & \textbf{Rollouts} & \textbf{Scoring / supervision} & \textbf{Optimization pattern} & \textbf{Inference overhead} \\
\midrule
VideoDPO & 100k & Single-form reward and pair construction & Fixed-pool preference training & None \\
OnlineVPO & 100k & Online reward and pair refresh & Interleaved generation and updates & None \\
\grayrow \method{} & 100k & Cross-form scores; shortlist-only audit & Parallel offline \GCD{} + \FCDPO{} & None \\
\bottomrule
\end{tabular}
\caption{Per-run protocol comparison under a matched video-rollout budget. API calls for the shortlist audit are logged separately by the compiler.}
\label{tab:compute-cost}
\end{table*}

Under this matched budget, \method{} incurs modest additional offline work over VideoDPO and remains in the same end-to-end cost regime as OnlineVPO. Its additional scoring and auditing are one-time and highly parallelizable, whereas OnlineVPO introduces sequential dependencies between generation, scoring, and policy updates. All three methods retain the same generator architecture and sampling configuration at inference, so \method{} adds no per-video deployment cost.

\paragraph{Reproducibility and release.}
Headline trainable systems use three matched training seeds and fixed evaluation seeds. The four attribution-critical variants---GCD, intent-audit-only FC-DPO, \method{} without matched-seed pairs, and full \method{}---also use three paired runs from matched GCD branches; remaining ablations use one run on a shared development split unless stated otherwise. Compared methods share prompt groups, evaluation seeds, decoding settings, and evaluator versions whenever permitted by the method definition. \method{}-specific settings are selected from held-out validation behavior and the accuracy--cost diagnostics reported below. The project implementation includes the \method{} compiler, objectives, metrics, schemas, deterministic synthetic fixtures, tests, and manuscript-aggregate audits. External video backbones and evaluator implementations are connected through documented adapters and are not redistributed with this preprint. The synthetic fixtures support interface and regression tests, while the manuscript audits verify reported aggregate arithmetic. Source code, configurations, and evaluation scripts are not distributed with this preprint; any public release is subject to author and organizational approval. Any processed metadata release is also subject to applicable third-party licenses and organizational policies.

\section{Evaluation Protocol}
\label{sec:evaluation}

\subsection{Intent-level robustness statistics}

We extend the official T2V-CompBench and VBench evaluator implementations with verified prompt-form groups and intent-level aggregation, denoting the resulting protocols T2V-CompBench-R and VBench-R. Each verified rewrite inherits the source benchmark category and structured semantic target; the generator input is the only changed component. For a benchmark score $s_i^{(b)}(z)$ obtained for form $i$ of intent $z$, the robustness extension computes
\begin{equation}
\begin{aligned}
\mathrm{Mean}(z)&=L^{-1}\sum_i s_i^{(b)}(z),\\
\mathrm{Worst}(z)&=\min_i s_i^{(b)}(z),\\
\mathrm{Drop}(z)&=\max_i s_i^{(b)}(z)-\min_i s_i^{(b)}(z).
\end{aligned}
\label{eq:robust-metrics}
\end{equation}
Dataset-level Mean, Worst, and Drop are macro-averaged over visual intents. Form-Induced Failure Rate identifies groups for which at least one form passes a fixed benchmark-specific threshold while another form fails it:
\begin{equation}
f_z^{(b)}=
\1\!\left[
\min_i s_i^{(b)}(z)<\tau_b
\ \wedge\ 
\max_i s_i^{(b)}(z)\geq\tau_b
\right],
\end{equation}
\begin{equation}
\mathrm{FIFR}=100\,N^{-1}\sum_{z=1}^{N}f_z^{(b)}.
\label{eq:fifr-appendix}
\end{equation}
The threshold is fixed independently of the compared method. T2V-CompBench-R uses a validation-frozen threshold for each inherited category, and VBench-R uses one for each matched semantic dimension. The reported FIFR is the macro-average of the resulting category- or dimension-wise failure rates. FIFR measures mixed-success inconsistency and can also be low when every form fails, so it is reported alongside Mean and Worst to retain absolute performance information. Standard benchmark scores and robustness statistics use the same generated samples and evaluator implementation within each comparison.

\begin{table}[H]
\centering
\scriptsize
\setlength{\tabcolsep}{2.6pt}
\resizebox{\linewidth}{!}{%
\begin{tabular}{@{}lccccccc@{}}
\toprule
\textbf{T2V-C category} & Consist. & Dynamic & Spatial & Motion & Action & Interact. & Numeracy \\
\midrule
\ThresholdRowVFourteen
\bottomrule
\end{tabular}}
\caption{Validation-fixed success thresholds used by T2V-CompBench-R FIFR. VBench-R analogously uses the base-model validation median of each matched semantic dimension.}
\label{tab:fifr-thresholds}
\end{table}

\subsection{Comparison discipline}

All methods receive the same prompt strings, evaluation seed sets, and generation configuration within each benchmark. The offline methods share a common mined rollout pool; OnlineVPO refreshes samples on-policy under the same total generation budget. Baseline checkpoint selection precedes qualitative inspection. Prompt-enhancement comparisons freeze each enhanced prompt and feed it identically to the Base and \method{}-trained DiT. Quantitative VBench and T2V-CompBench results use the 480p, 121-frame, 24-fps Hunyuan protocol with 50 sampling steps; the qualitative clips use 720p with the same frame count, frame rate, and 50-step sampler. StoryEval follows its official 10s HunyuanVideo evaluation protocol and serves as an auxiliary event-level diagnostic outside mining and checkpoint selection.

\subsection{Statistical protocol and repeated runs}
The statistical unit is the source intent, never an individual form or frame. Continuous paired differences use 10,000 intent-level bootstrap resamples for 95\% confidence intervals and a two-sided Wilcoxon signed-rank test. FIFR uses paired bootstrap intervals for rate differences and McNemar's test. The eight primary robustness endpoints are corrected jointly with Holm's procedure. Human comparisons use cluster bootstrap over intent groups and retain ties.

\begin{table}[H]
\centering
\small
\setlength{\tabcolsep}{3.0pt}
\begin{tabular}{@{}lccc@{}}
\toprule
\textbf{Method} & T2V-C Avg. & T2V-C-R Worst & VBench-R Worst \\
\midrule
\RunRowsVThirtyThree
\bottomrule
\end{tabular}
\caption{Mean $\pm$ standard deviation over three independent training runs.}
\label{tab:run-variance}
\end{table}

\begin{table}[H]
\centering
\scriptsize
\setlength{\tabcolsep}{2.2pt}
\resizebox{\linewidth}{!}{%
\begin{tabular}{@{}lcccc@{}}
\toprule
\textbf{Attribution-critical variant} & \textbf{Mean} & \textbf{Worst} & \textbf{Drop} & \textbf{FIFR} \\
\midrule
\CoreAttributionRowsVThirtyFour
\bottomrule
\end{tabular}}
\caption{Mean $\pm$ standard deviation over three paired training branches on the fixed development split.}
\label{tab:core-attribution-runs}
\end{table}

Relative to audit-only FC-DPO, full \method{} changes Mean/Worst/Drop/FIFR by $+0.23/+0.42/-0.32/-0.58$. Relative to \method{} without matched-seed pairs, the corresponding changes are $+0.11/+0.23/-0.17/-0.43$. The same ordering is retained across the three paired training branches in Table~\ref{tab:core-attribution-runs}, supporting a modest final contribution from matched provenance and a larger contribution from audited group-level preference optimization.

Against the strongest prompt-level comparator for each endpoint, \method{} improves T2V-CompBench-R/VBench-R Worst by \TTwoRWorstGainPrompt{}/\VBenchRWorstGainPrompt{} points, reduces Drop by \TTwoRDropReductionPrompt{}/\VBenchRDropReductionPrompt{} points, and reduces FIFR by \TTwoRFIFRReductionPrompt{}/\VBenchRFIFRReductionPrompt{} percentage points. For VBench-R, paired 95\% intervals are $[2.66,4.43]$ for Worst and $[-3.39,-2.14]$ for Drop; the Worst comparison has Holm-adjusted $p<0.001$, and the FIFR reduction is significant under McNemar with adjusted $p<0.01$. These paired VBench-R statistics are computed at the source-intent level from the final per-intent outputs rather than reconstructed from aggregate means.

\FloatBarrier
\section{Supervision Validity, Independence, and Boundary Tests}
\label{sec:validity-controls}

\subsection{Blind audit of mined supervision}
We sample 100 candidates from each mining type and hide method identity, generating form, scoring form, and all automatic scores. Three annotators judge the video only against $T_z$ and separately record whether it is visually usable. Human-label precision denotes semantic validity for positives and semantic-defect precision for negatives. Independent agreement is the percentage on which a held-out audit model and the human majority assign the same target label.

\begin{table}[H]
\centering
\small
\setlength{\tabcolsep}{3.0pt}
\resizebox{\linewidth}{!}{%
\begin{tabular}{@{}lccc@{}}
\toprule
\textbf{Mined type} & Human precision & Visual usability & Indep. agreement \\
\midrule
\AuditRowsVThirtySix
\bottomrule
\end{tabular}}
\caption{Blind supervision audit. Bold marks the best result among negative-mining variants; each cell reports valid counts out of $N=100$ candidates per type.}
\label{tab:audit-validity}
\end{table}

Across the 300 candidates used for correlation analysis, the primary Gemini auditor's $C_{\min}$ correlates with human correctness at Spearman $\rho=0.63$. Let $\mathcal N_{\mathrm{G}}$ and $\mathcal N_{\mathrm{Q}}$ be the retained negative sets from Gemini 3.5 Flash and Qwen3.6-27B under the same per-intent budget $m_-=2$. Their Jaccard overlap $|\mathcal N_{\mathrm{G}}\cap\mathcal N_{\mathrm{Q}}|/|\mathcal N_{\mathrm{G}}\cup\mathcal N_{\mathrm{Q}}|$ is 0.73; an independent review of 100 Qwen-retained negatives confirms 90 defects. Dispersion-only negatives remain substantially less precise despite high visual usability, supporting the separation between scoring-view dispersion and semantic typing.

\subsection{Intent-disjoint construction and overlap audit}
The 5,000 training intents are generated from a structured non-benchmark schema balanced across attributes, relations, actions, interactions, numeracy, and multi-event composition. Source-intent splits are frozen before rewriting. Evaluation groups use disjoint rewrite instructions and verifier instances. We normalize text for exact matching, retrieve lexical near-neighbors, and retrieve semantic neighbors using a frozen sentence encoder; every candidate above the validation-fixed threshold is manually reviewed against the structured intent.

\begin{table}[H]
\centering
\small
\setlength{\tabcolsep}{4.0pt}
\resizebox{\linewidth}{!}{%
\begin{tabular}{@{}lcc@{}}
\toprule
\textbf{Evaluation source} & Retrieved candidates & Confirmed shared intents \\
\midrule
\OverlapRowsVThirtyThree
\bottomrule
\end{tabular}}
\caption{Training--evaluation overlap audit. Exact normalized overlap is zero for every source.}
\label{tab:overlap-audit}
\end{table}

\subsection{Seed diversity and non-equivalent counterfactuals}
For each held-out intent and form, we generate eight seeds. Appearance features use the frozen CLIP ViT-L/14 image encoder \cite{clip}: 16 uniformly sampled frames are independently encoded, $\ell_2$-normalized, temporally averaged, and normalized again. Appearance diversity is the mean pairwise cosine distance across seeds. Motion features use the official VideoCLIP S3D checkpoint trained on HowTo100M \cite{videoclip}: 32 ordered frames are encoded into the final clip-level video embedding, normalized, and compared with cosine distance. Both diversity scores are averaged by intent and form and reported relative to Base.

For counterfactuals, we create 200 prompt pairs that change exactly one count, spatial relation, action role, or temporal order. Three annotators confirm that 195/200 initial pairs differ only in the intended commitment; the five failures are replaced before generation. The held-out VisionReward checkpoint \cite{visionreward}, which is not used in mining or checkpoint selection, scores each video against both prompts. After mapping its score to the common 0--100 scale, the symmetric margin is $\frac12[(R(x_p,p)-R(x_p,p'))+(R(x_{p'},p')-R(x_{p'},p))]$.

\begin{table}[H]
\centering
\small
\setlength{\tabcolsep}{3.0pt}
\begin{tabular}{@{}lccc@{}}
\toprule
\textbf{Method} & Appearance div. & Motion div. & CF margin \\
\midrule
\DiversityRowsVThirtyThree
\bottomrule
\end{tabular}
\caption{Diversity retention relative to Base (\%) and counterfactual semantic margin.}
\label{tab:diversity-cf}
\end{table}

\begin{table}[H]
\centering
\small
\setlength{\tabcolsep}{3.2pt}
\begin{tabular}{@{}lcccc@{}}
\toprule
\textbf{Change type} & Base & OnlineVPO & GCD & \method{} \\
\midrule
\CounterfactualRowsVThirtyThree
\bottomrule
\end{tabular}
\caption{Counterfactual margin by minimally changed semantic commitment.}
\label{tab:counterfactual-detail}
\end{table}

\method{} retains most seed-level appearance and motion diversity while increasing every counterfactual margin. These diagnostics provide evidence against mode contraction or indiscriminate suppression of prompt conditioning as the primary explanation for the robustness gains.

\FloatBarrier
\section{Expanded Quantitative Results}
\label{sec:results}

\subsection{T2V-CompBench dimensions}

Table~\ref{tab:full-t2v} reports all seven displayed T2V-CompBench categories. The average is the arithmetic mean of the same seven entries.

\paragraph{Category-level observations.}
On T2V-CompBench, \method{} obtains the highest overall average and leads five of the seven displayed categories: consistency, spatial relations, action, interaction, and numeracy. The remaining two categories expose a realistic trade-off rather than a uniformly monotone pattern. OnlineVPO is strongest on dynamic degree and motion binding, while GCD remains competitive on consistency, spatial relations, action, and interaction. The largest \method{} gains occur in prompt-sensitive categories that require entity counts, relation preservation, or correct assignment of actions to participants.

\begin{table}[H]
\centering
\scriptsize
\renewcommand{\arraystretch}{0.88}
\setlength{\tabcolsep}{1.35pt}
\resizebox{\linewidth}{!}{%
\begin{tabular}{@{}lccccc@{}}
\toprule
\textbf{Category} & \textbf{Base} & \textbf{VideoDPO} & \textbf{OnlineVPO} & \textbf{GCD} & \cellcolor{variGray}\textbf{\method{}} \\
\midrule
Consistency & 79.82 & 81.34 & 82.31 & \second{83.26} & \cellcolor{variGray}\best{86.64} \\
Dynamic & 7.47 & 8.25 & \best{10.04} & 8.66 & \cellcolor{variGray}\second{9.58} \\
Spatial & 53.11 & 55.56 & 56.72 & \second{56.97} & \cellcolor{variGray}\best{58.55} \\
Motion & 40.29 & \second{41.89} & \best{43.91} & 39.88 & \cellcolor{variGray}40.42 \\
Action & 69.31 & 71.24 & 72.13 & \second{72.71} & \cellcolor{variGray}\best{75.06} \\
Interaction & 60.57 & 63.49 & 64.49 & \second{66.61} & \cellcolor{variGray}\best{67.78} \\
Numeracy & 23.24 & \second{25.42} & 23.81 & 25.29 & \cellcolor{variGray}\best{26.87} \\
Average & 47.69 & 49.60 & \second{50.49} & 50.48 & \cellcolor{variGray}\best{52.13} \\
\bottomrule
\end{tabular}}
\caption{T2V-CompBench category scores under the unified evaluation protocol. Values are percentages; the table is transposed to preserve a stable single-column layout.}
\label{tab:full-t2v}
\end{table}

\section{Ablations Linked to the Formal Analysis}
\label{sec:ablations}
Unless otherwise stated, the ablations in this section are measured on a fixed development split of held-out T2V-CompBench-R intent groups. Absolute values therefore need not exactly match the headline test-set results in the main paper, but every variant within an ablation table is evaluated on the same split and with the same evaluator.

\subsection{Training objective}

The objective ablation separates paraphrase exposure, shared support, semantic audit, and preference construction. Reward-SFT raises the mean but does not protect the weakest form. GCD produces the largest initial contraction. Adding ordinary prompt-DPO after GCD increases Mean but degrades Worst, Drop, and FIFR, whereas audited group-lifted FC-DPO improves all four endpoints over GCD, showing that Stage 2 contributes semantic boundary refinement rather than additional training alone.

\begin{table}[H]
\centering
\small
\setlength{\tabcolsep}{3.0pt}
\begin{tabular}{@{}lcccc@{}}
\toprule
\textbf{Variant} & Mean & Worst & Drop & FIFR \\
\midrule
\SuppTrainingRowsVFourteen
\bottomrule
\end{tabular}
\caption{Training-objective ablation on T2V-CompBench-R.}
\label{tab:abl-training}
\end{table}

\subsection{Causal alternatives}
Table~\ref{tab:causal-full} matches data volume and evaluation split across simple alternatives. Multi-form prompt-DPO tests paraphrase exposure without group scoring; direct worst-form DPO tests explicit lower-tail optimization; form-specific and single-positive distillation test whether the gain is merely more SFT or one canonical target; inference reranking tests whether GroupScore helps without changing the generator; and the final rows isolate dispersion, audit, and matched-seed pairing.

\begin{table}[H]
\centering
\scriptsize
\setlength{\tabcolsep}{2.4pt}
\begin{tabular}{@{}lcccc@{}}
\toprule
\textbf{Variant} & Mean & Worst & Drop & FIFR \\
\midrule
\CausalRowsVThirtyThree
\bottomrule
\end{tabular}
\caption{Full causal ablation on the fixed T2V-CompBench-R development split.}
\label{tab:causal-full}
\end{table}

The simple alternatives improve individual metrics but do not reproduce the final balance. Direct worst-form DPO contracts the lower tail but gives lower Mean and higher FIFR; reranking improves selection but not generator behavior; and dispersion-only negatives underperform audit-confirmed negatives. The paired repeats in Table~\ref{tab:core-attribution-runs} show that audited group-level preference optimization raises Worst and reduces mixed-success failures beyond GCD, while matched provenance adds a smaller complementary improvement in Worst, Drop, and FIFR.

\subsection{Mining statistics}

The component ablation matches Proposition~\ref{prop:positive-coverage} and the mean-only counterexample in the group-structured supervision analysis: mean-only selection can retain candidates with a weak tail, while adding $w$ and $d$ improves the lower-tail metrics. The quality term retains visually usable positives rather than the primary robustness signal; its main effect is to remove visibly weak positives while keeping robustness nearly unchanged.

\begin{table}[H]
\centering
\scriptsize
\setlength{\tabcolsep}{2.0pt}
\begin{tabular}{@{}lccc|ccc@{}}
\toprule
\textbf{Mining score} & $\mu$ & $w$ & $-d$ & Worst$\uparrow$ & Drop$\downarrow$ & FIFR$\downarrow$ \\
\midrule
\MiningRowsVFourteen
\bottomrule
\end{tabular}
\caption{GroupScore-component ablation with T2V-CompBench-R evaluation.}
\label{tab:abl-score}
\end{table}

\subsection{Pair construction and group scale}

Audited form-contrastive pairs are more informative than globally low-score negatives because they combine a genuine target violation with visual plausibility and uneven cross-form evidence. Increasing the number of verified forms improves coverage of wording variation but also increases cross-scoring cost. The gains largely saturate beyond five verified forms, so we adopt five forms and four seeds as the default accuracy--cost tradeoff.

\begin{table}[H]
\centering
\scriptsize
\setlength{\tabcolsep}{2.6pt}
\begin{tabular}{@{}lccc@{}}
\toprule
\textbf{Setting} & Worst & Drop & FIFR \\
\midrule
\PairScaleRowsVFourteen
\bottomrule
\end{tabular}
\caption{Pair-construction diagnostics (upper block) and rollout-scale diagnostics (lower block).}
\label{tab:pair-scale}
\end{table}

\subsection{Prompt-form verification}

Unverified rewrites may increase lexical diversity while silently changing the target intent. Structured verification improves the validity of cross-form supervision rather than merely increasing the number of prompts.

\begin{table}[H]
\centering
\small
\setlength{\tabcolsep}{3.0pt}
\begin{tabular}{@{}lcccc@{}}
\toprule
\textbf{Group construction} & Mean & Worst & Drop & FIFR \\
\midrule
\VerificationRowsVFourteen
\bottomrule
\end{tabular}
\caption{Effect of prompt-form verification and group size.}
\label{tab:group-construction}
\end{table}

\FloatBarrier
\section{Prompt Enhancement and Cross-Backbone Evaluation}
\label{sec:complementary}

\subsection{Official prompt-enhancement condition}

HunyuanVideo-1.5 provides an official text-to-video prompt-rewriting instruction and a prompt-writing handbook for automatic enhancement \cite{hunyuan15repo,hunyuan15handbook}. The evaluation uses the official rewriting procedure without adding a method-specific semantic instruction. For each raw verified form, the enhanced text is frozen and supplied identically to the Base DiT and the \method{}-trained DiT. Scoring remains anchored to the original benchmark intent, so the comparison tests whether generator-side alignment remains useful when inference-side enhancement is present.

\begin{promptbox}{Official PE evaluation contract}
\textbf{Input.} The same raw prompt form for both compared generators.

\textbf{Enhancement.} Apply the official HunyuanVideo-1.5 text-to-video rewriting procedure once; do not manually edit one method's enhanced prompt or use enhancement during model selection.

\textbf{Generation.} Freeze the enhanced prompt and use the same inference configuration for the Base and \method{}-trained DiT.

\textbf{Evaluation.} Score both outputs against the same original benchmark target. The enhanced text changes the generation-side conditioning, not the semantic reference used for comparison.
\end{promptbox}

\begin{table}[H]
\centering
\small
\setlength{\tabcolsep}{3.0pt}
\begin{tabular}{@{}lccccc@{}}
\toprule
\textbf{Input} & \textbf{Generator} & T2V-C Avg. & R-Worst & R-Drop & R-FIFR \\
\midrule
\PEFullRowsFinal
\bottomrule
\end{tabular}
\caption{Prompt-enhancement compatibility under a shared enhanced input.}
\label{tab:pe}
\end{table}

\subsection{Cross-backbone transfer and event-level diagnostic}
The LTX-2.3 experiment \cite{ltx23repo} evaluates whether the same two-stage training recipe transfers beyond the primary HunyuanVideo-1.5 backbone. Table~\ref{tab:ltx} reports the full comparison under the matched LTX-2.3 protocol.

\begin{table}[H]
\centering
\scriptsize
\setlength{\tabcolsep}{2.2pt}
\renewcommand{\arraystretch}{0.90}
\resizebox{\linewidth}{!}{%
\begin{tabular}{@{}lccccc@{}}
\toprule
\textbf{LTX-2.3} & T2V-C & R-Worst & R-Drop & VBench-R & Story \\
\midrule
\LTXFullRowsVFourteen
\bottomrule
\end{tabular}}
\caption{Cross-backbone validation on LTX-2.3. VBench-R reports semantic Worst; Story denotes StoryEval.}
\label{tab:ltx}
\end{table}

StoryEval uses the official 10s HunyuanVideo generation protocol and evaluator implementation. Its longer, event-centric prompts provide an auxiliary diagnostic of strict event completion outside supervision construction and checkpoint selection.

\begin{table}[H]
\centering
\small
\setlength{\tabcolsep}{3.8pt}
\resizebox{\linewidth}{!}{%
\begin{tabular}{@{}lccccc@{}}
\toprule
\textbf{Method} & Base & Reward-SFT & VideoDPO & OnlineVPO & \method{} \\
\midrule
\StoryRowsVFourteen
\bottomrule
\end{tabular}}
\caption{StoryEval strict event-completion score.}
\label{tab:storyeval}
\end{table}

\section{Auxiliary Natural-Caption Controls}
\label{sec:naturalcaps}

\subsection{Verified VATEX multi-caption diagnostic}
VATEX provides multiple English captions for the same source video, making it a useful auxiliary test bed for natural-caption grouping \cite{vatex}. We first collect candidate caption groups from the same video and then apply the same semantic-preservation verification used by \method{}: retained captions must agree on the main entities, participant roles, principal action, core interaction or spatial relation, and temporal scope. Groups that describe different moments or omit a required relation are discarded. The verification retains 619 VATEX intent groups, all disjoint from the \method{} training intents and checkpoint-selection data. Generation uses the same inference configuration as T2V-CompBench-R, while scoring uses held-out VisionReward only for this auxiliary diagnostic \cite{visionreward}.

\begin{table}[H]
\centering
\small
\setlength{\tabcolsep}{3.0pt}
\begin{tabular}{@{}lcccc@{}}
\toprule
\textbf{Method} & Mean & Worst & Drop & FIFR \\
\midrule
\VATEXRowsVFourteen
\bottomrule
\end{tabular}
\caption{Verified VATEX multi-caption diagnostic scored with held-out VisionReward.}
\label{tab:vatex}
\end{table}

\subsection{Disjoint MSR-VTT control}
MSR-VTT provides a stricter leakage-sensitive disjoint control \cite{msrvtt}. We remove duplicate video IDs, exact caption duplicates, and near-duplicate captions against the \method{} training pool and benchmark-derived evaluation sets. The remaining captions undergo the same semantic-equivalence verification, retaining only groups that express the same visually testable intent. The filtering and verification retain 411 disjoint groups, which are evaluated with the same held-out reward protocol as VATEX.

\begin{table}[H]
\centering
\small
\setlength{\tabcolsep}{3.0pt}
\begin{tabular}{@{}lcccc@{}}
\toprule
\textbf{Method} & Mean & Worst & Drop & FIFR \\
\midrule
\MSRVTTRowsVFourteen
\bottomrule
\end{tabular}
\caption{Disjoint MSR-VTT control under the same held-out reward protocol.}
\label{tab:msrvtt}
\end{table}

\FloatBarrier
\section{Human and Qualitative Evaluation Protocols}
\label{sec:qualitative}

\subsection{Human comparison}

Human evaluation follows a blinded pairwise design on 200 held-out intent groups. Annotators receive the visual intent, the verified prompt forms, and anonymized video sets from two methods. Method identity and left/right placement are hidden and randomized. Separate questions assess intent faithfulness, consistency across equivalent forms, visual quality, and overall preference. Each comparison is judged by three independent annotators; ties are retained rather than forced into a directional choice. Win, tie, and loss are raw percentages over all judgments and therefore sum to 100\%. Decisive-win rates excluding ties are reported only as an auxiliary statistic.

\begin{promptbox}{Human-evaluation questions}
\textbf{Intent faithfulness.} Which set more accurately depicts the required entities, actions, attributes, and relations?

\textbf{Prompt-form consistency.} Which set preserves the same intended content more consistently across the equivalent forms?

\textbf{Visual quality.} Which set has better perceptual quality, temporal coherence, and fewer visible artifacts?

\textbf{Overall preference.} Considering faithfulness, consistency, and quality together, which set is preferred?
\end{promptbox}

For the comparison with OnlineVPO, Table~\ref{tab:human-detailed} reports integer outcomes and the percentages derived from them. The decisive-win rates excluding ties are 69.84\% for intent faithfulness, 73.11\% for cross-form consistency, 63.93\% for visual quality, and 70.24\% overall. Confidence intervals use intent-cluster bootstrap; Krippendorff's $\alpha$ is computed on the three-way win/tie/loss labels.

\begin{table}[H]
\centering
\scriptsize
\setlength{\tabcolsep}{2.0pt}
\resizebox{\linewidth}{!}{%
\begin{tabular}{@{}lrrrrrrrrr@{}}
\toprule
\textbf{Dimension} & \multicolumn{3}{c}{\textbf{Counts}} & \multicolumn{3}{c}{\textbf{Raw shares (\%)}} & \textbf{W--L} & \textbf{95\% CI} & $\alpha$ \\
\cmidrule(lr){2-4}\cmidrule(lr){5-7}
& W & T & L & W & T & L & (pt.) & & \\
\midrule
\HumanDetailRowsVThirtySix
\bottomrule
\end{tabular}}
\caption{Detailed blinded human evaluation: \method{} versus OnlineVPO. Each dimension contains 600 judgments.}
\label{tab:human-detailed}
\end{table}

\subsection{Qualitative cases and video comparison protocol}

The main paper reports two held-out T2V-CompBench source prompts, one from spatial relationships and one from action binding. Each source is paired with one verified semantic-preserving form. The left/right prompt pairs are:
\begin{itemize}
    \item \textbf{Spatial relation:} ``A boy reading in front of a bench.'' and ``A bench is positioned behind a boy who is reading.''
    \item \textbf{Action binding:} ``A man is reading a newspaper on a park bench, while a squirrel is climbing a tree.'' and ``While a squirrel climbs a tree, a man reads a newspaper on a park bench.''
\end{itemize}
All compared methods use the same prompt string and inference configuration within each row. The complete qualitative clips are generated with HunyuanVideo-1.5 at 720p, 121 frames, 24 fps, and 50 sampling steps. Quantitative VBench and T2V-CompBench tables use the corresponding 480p, 50-step protocol. The main-paper three-frame strips are sampled in temporal order from these complete clips. The twelve full clips underlying the frame-strip comparisons follow the organization summarized below; the video files are not distributed with this preprint.

\begin{analysisbox}{Qualitative clip organization}
The evaluated clips were organized as follows:
\begin{flushleft}
\ttfamily\footnotesize
spatial/\\
\quad source/\{base,videodpo,vari\}.mp4\\
\quad equivalent/\{base,videodpo,vari\}.mp4\\[2pt]
action\_binding/\\
\quad source/\{base,videodpo,vari\}.mp4\\
\quad equivalent/\{base,videodpo,vari\}.mp4
\end{flushleft}
All clips use the compared methods and the shared 720p generation protocol described above.
\end{analysisbox}

\paragraph{Spatial-relation case.}
Across both forms, Base and VideoDPO place the reader on the bench, replacing the requested \emph{in-front-of}/\emph{behind} relation with a frequent reading-on-a-bench configuration. One baseline clip additionally duplicates the reader, but the shared failure is the relation change rather than the mere presence of a counting artifact. The two \method{} clips preserve the intended relation throughout the sequence while allowing an unconstrained posture change: the boy sits on the grass in one form and stands in front of the bench in the other. The complete clips also expose temporal behavior hidden by the three-frame layout. Baseline book pages can flip repeatedly or change geometry abruptly, whereas the \method{} outputs maintain a more coherent reading interaction; the seated result contains a brief localized page turn and the standing result remains stable without changing the spatial arrangement.

\paragraph{Action-binding case.}
The baseline videos display several ways in which a partially correct scene can still violate the compositional intent. The man--newspaper event is often present, but squirrels may be duplicated, appear on the bench rather than the tree, or fail to maintain a single upward trajectory; one clip instead duplicates the reader. The two \method{} clips preserve both assignments over time: the man continues reading and the squirrel remains on the tree while moving upward. The full clips also show stable newspaper handling with limited page motion, rather than a still-frame coincidence, and make the climbing progression visible from low to higher positions on the trunk.

Checkpoint selection precedes qualitative inspection. We then select held-out groups with clear weakest-form improvements and interpretable failure modes; the quantitative conclusions are based on the full benchmark results.

\subsection{Scope of the public preprint}
This preprint includes the full technical appendix and static qualitative panels. The twelve complete example clips and the project code/data are not distributed with it. The project implementation includes \method{} schemas, the supervision compiler, GCD and FC-DPO objectives, robustness aggregation, prompt templates, deterministic synthetic fixtures, unit and smoke tests, a paper-to-code mapping, and manuscript-aggregate arithmetic audits. Documented adapters connect to external backbones, evaluator repositories, and pretrained weights.

\begin{figure*}[!t]
\centering
\includegraphics[width=0.985\textwidth]{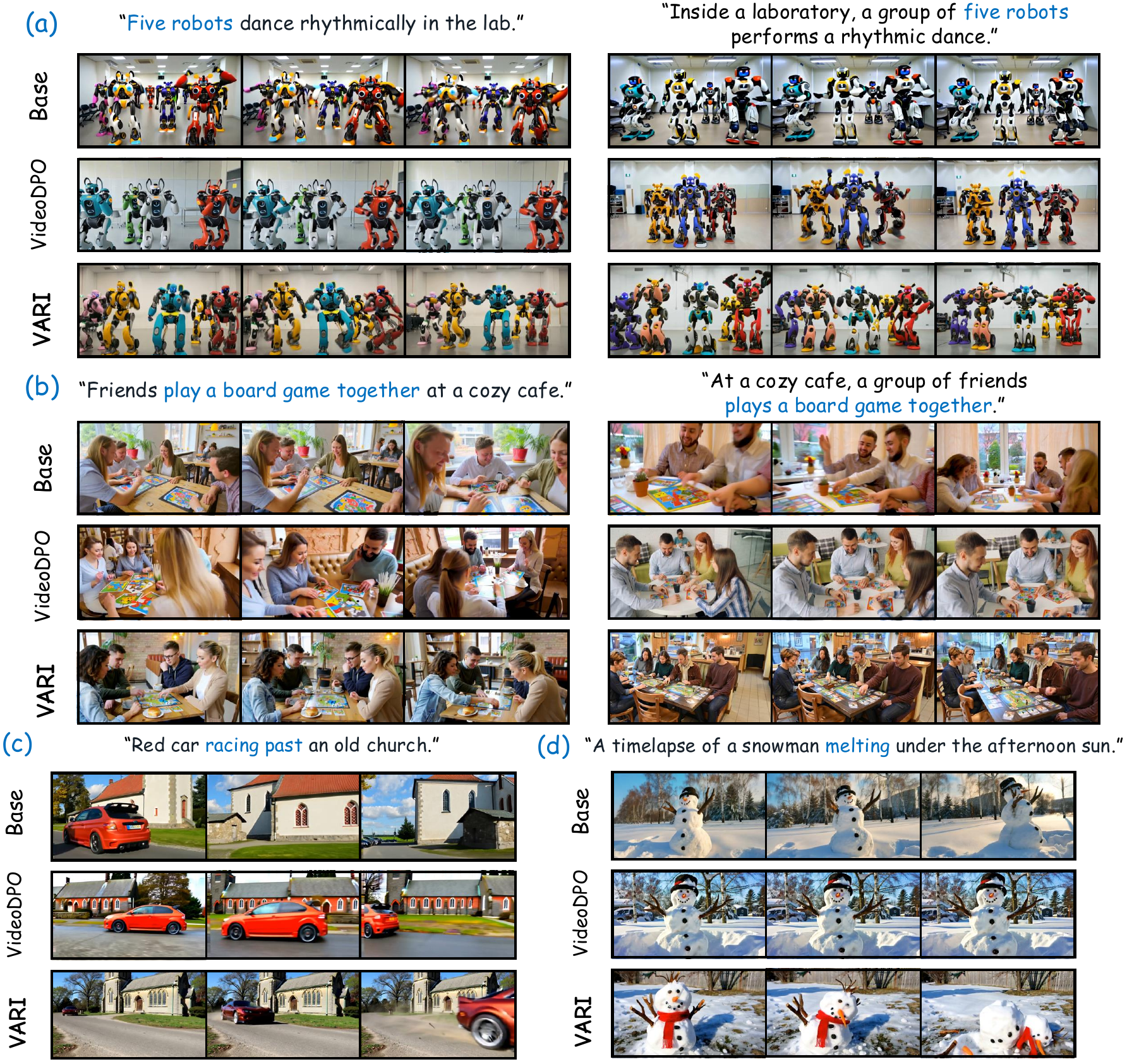}
\vspace{-5pt}
\caption{Four additional held-out cases with ordered frames. (a) \method{} preserves five salient robots and the group dance across both forms. (b) It maintains one coherent game layout and stable participants. (c) It moves the car past a stable church. (d) It realizes progressive snowman collapse and volume loss.}
\label{fig:app-qualitative}
\end{figure*}

\begin{table}[H]
\centering
\scriptsize
\setlength{\tabcolsep}{2.8pt}
\begin{tabular}{@{}>{\raggedright\arraybackslash}p{0.31\linewidth}c>{\raggedright\arraybackslash}p{0.39\linewidth}@{}}
\toprule
\textbf{Comparison} & \textbf{Forms} & \textbf{Material shown} \\
\midrule
Main-paper spatial relation & $2\times3$ & Displayed strips (6 evaluated clips) \\
Main-paper action binding & $2\times3$ & Displayed strips (6 evaluated clips) \\
Appendix paired-form cases & $2\times2$ & Static temporally ordered panels \\
Appendix dynamics cases & $2\times1$ & Static temporally ordered panels \\
\bottomrule
\end{tabular}
\caption{Qualitative evidence shown in this preprint. Each count is prompt forms $\times$ compared methods or intents, as applicable.}
\label{tab:qualitative-material}
\end{table}

\subsection{Additional qualitative results}
\label{sec:additional-qualitative}
Figure~\ref{fig:app-qualitative} provides four additional held-out comparisons using temporally ordered frames. Cases (a) and (b) compare the source prompt with one verified semantic-preserving form, while cases (c) and (d) focus on motion realization and temporal state change from the source prompt. These four cases are presented as static appendix panels; the twelve full clips underlying the main-paper frame strips are not distributed with this preprint.

\paragraph{Numeracy and group action.}
In panel (a), the numeral is a binding constraint rather than a stylistic preference. Base produces crowded groups with additional background robots, while VideoDPO commonly exposes only four dominant robots or changes the apparent count across frames and forms. \method{} keeps five clearly identifiable robots in both prompt formulations and preserves the requested collective dancing behavior with stronger temporal and appearance consistency.

\paragraph{Multi-person interaction and object coherence.}
Panel (b) stresses scene-level logic rather than a single highlighted token. Base and VideoDPO frequently turn the singular board game into multiple disconnected, duplicated, or fragmented playing surfaces, weakening both object identity and the social interaction. \method{} produces the most coherent table geometry and game layout, keeps the participants engaged with the same central activity, and yields cleaner faces, hands, and cafe structure across the sequence.

\begin{analysisbox}{Qualitative audit cues}
The four appendix cases are read against explicit visual commitments: exact entity count and shared group action in panel (a); stable participants and one coherent game object in panel (b); object displacement relative to fixed landmarks in panel (c); and a sustained, monotonic state transition in panel (d). The temporally ordered frames support these checks without substituting for the complete clips used in the main-paper cases.
\end{analysisbox}

\paragraph{Motion realization and scene preservation.}
For panel (c) in Figure~\ref{fig:app-qualitative}, the decisive requirement is object motion relative to fixed landmarks. The Base strip provides little evidence of the car translating through the scene and is dominated by viewpoint change. VideoDPO keeps a visible car but shows weaker frame-to-frame appearance, scale, and trajectory continuity. \method{} presents a clear progression from a distant car to a close pass while retaining the old church as a stable spatial reference.

\paragraph{Temporal state change.}
Panel (d) in Figure~\ref{fig:app-qualitative} tests whether the model executes a sustained transformation rather than merely rendering the correct initial object. The Base snowman remains essentially intact, and VideoDPO shows at most a small local dripping cue without a substantial change of state. \method{} produces a pronounced, monotonic melting sequence: the body loses volume, the structure collapses, and the final frame visibly differs from the initial snowman.

\endgroup
\end{document}